\theoremstyle{plain}
\newtheorem{theorem}{Theorem}[section]
\theoremstyle{definition}
\theoremstyle{remark}
\DeclareMathOperator*{\argmax}{argmax} 
\newcommandx{\DJJ}[2][1=]{%
  \todo[inline,
        linecolor=red,
        backgroundcolor=red!25,
        bordercolor=red,
        #1,
        size=\small]{DJ: #2}%
}
\newcommandx{\MM}[2][1=]{%
  \todo[inline,
        linecolor=orange,
        backgroundcolor=orange!25,
        bordercolor=orange,
        #1,
        size=\small]{MM: #2}%
}
\newcommand{\ourmethod}{\textsc{CAOTE}}
\newcommand{\cmark}{\color{green}{\ding{51}}}%
\newcommand{\xmark}{\color{red}{\ding{55}}}%
\icmltitlerunning{\ourmethod{}: KV Cache Selection for LLMs via Attention Output Error-Based Token Eviction}
\begin{document}


\twocolumn[
\icmltitle{\ourmethod{}: KV Cache Selection for LLMs via Attention Output Error-Based Token Eviction}



\icmlsetsymbol{equal}{*}

\begin{icmlauthorlist}
\icmlauthor{Raghavv Goel}{comp}
\icmlauthor{Junyoung Park}{comp}
\icmlauthor{Mukul Gagrani}{comp}
\icmlauthor{Dalton Jones}{comp}
\icmlauthor{Matthew Morse}{comp}
\icmlauthor{Harper Langston}{comp}
\icmlauthor{Mingu Lee}{comp}
\icmlauthor{Chris Lott}{comp}
\end{icmlauthorlist}

\icmlaffiliation{comp}{Qualcomm AI Research is an initiative of Qualcomm Technologies, Inc.}

\icmlcorrespondingauthor{Raghavv Goel, Junyoung Park, Mingu Lee}{raghgoel,junpark,mingul@qti.qualcomm.com}

\icmlkeywords{Machine Learning, ICML}

\vskip 0.3in
]



\printAffiliationsAndNotice{}  

\begin{abstract}

While long context support of large language models has extended their abilities, it also incurs challenges in memory and compute which becomes crucial bottlenecks in resource-restricted devices. 
Token eviction, a widely adopted post-training methodology designed to alleviate the bottlenecks by evicting less important tokens from the cache, typically uses attention scores as proxy metrics for token importance.
However, one major limitation of attention score as a token-wise importance metrics is that it lacks the information about contribution of tokens to the attention output.
In this paper, we propose a simple eviction criterion based on the contribution of cached tokens to attention outputs. Our method, \textit{CAOTE}, optimizes for error due to token eviction, by seamlessly integrating attention scores and value vectors. This is the first method to use information from the value tokens on top of attention-based eviction scores in closed-form. Additionally, \textit{CAOTE} can act as a meta-heuristic method with flexible usage with any token eviction method. 
We show that \textit{CAOTE}, when combined with state-of-the-art attention score-based methods, always improves accuracies on the downstream task for L{\small LAMA3} and Q{\small WEN2.5} model families, indicating the importance of leveraging information from values during token eviction process. 

\end{abstract}
\icmlkeywords{token eviction, training-free long context}

\section{Introduction}
\label{section:introduction}


Large Language Models (LLMs) have demonstrated impressive capabilities across a wide range of natural language tasks, including text generation \cite{coenen2021wordcraft}, machine translation \cite{xiao2023survey}, and question answering \cite{robinson2022leveraging}. Many of these applications—such as retrieval-augmented generation (RAG), long-form document understanding \cite{liao2024doclayllm}, summarization \cite{zhang2024benchmarking}, and multi-turn dialogue systems \cite{thoppilan2022lamda}—require processing long input sequences, giving rise to the class of long-context LLMs.

A central challenge in long-context inference is the increased computational and memory overhead, particularly during the prefill and generation phases. This is primarily due to the quadratic complexity of self-attention and the growing memory footprint from storing activations over extended sequences. To mitigate quadratic attention computation, Key-Value (KV) caching has become a standard optimization, enabling faster inference by reusing previously computed key-value states \cite{pope2023efficiently}. However, in long-context settings, the memory consumed by the KV cache can surpass that of the model itself \cite{zhang2024h2o}, posing a significant bottleneck—especially for memory-constrained environments.

%
Recent efforts to manage KV cache memory can be broadly categorized into training-based approaches \cite{zhang2023adaptive, xiaoefficient} and post-training methods. This work focuses on the latter, which operate under fixed memory budgets and dynamically evict tokens from the cache. Early post-training methods such as H2O \cite{zhang2024h2o} retain tokens with the highest cumulative attention scores, while subsequent approaches refine this strategy using variants of attention-based metrics \cite{oren2024transformers, li2024snapkv, tang2024quest, qin2025cake}. These methods exploit the sparsity of attention—where a small subset of tokens disproportionately influence the output—to preserve only the most salient cached entries.

A key limitation of existing eviction methods is their reliance solely on attention scores, which reflect the alignment between query and key tokens. However, the output of the self-attention layer—the attention output—is a weighted combination of these scores and the corresponding value tokens. Since this output directly influences the hidden state and ultimately the model's predictions, ignoring the contribution of value tokens can lead to suboptimal eviction decisions.


In this paper, we propose \textit{CAOTE} (\textbf{C}ache Selection via \textbf{A}ttention \textbf{O}utput error-based \textbf{T}oken \textbf{E}viction), a post-training eviction method that directly computes the impact of each token on the attention output in closed form. Unlike prior approaches, \textit{CAOTE} integrates both key and value contributions to estimate the exact eviction error during generation as shown in Table \ref{tab:overview_table}. It is model-agnostic and can be applied as a meta-eviction strategy atop existing score-based methods. We also propose an efficient variant of \textit{CAOTE} suitable for deployment in constrained environments.

Empirically, \textit{CAOTE} consistently improves performance across a diverse set of benchmarks, including 16 tasks from LongBench, Needle-in-Haystack retrieval, and perplexity evaluations. When combined with recent eviction strategies \cite{zhang2024h2o, oren2024transformers, li2024snapkv}, \textit{CAOTE} yields substantial gains in accuracy and efficiency.


The paper is divided into the following sections: Section \ref{sec:background} discusses background for token eviction. Our main method is in Section \ref{sec:caote} which consists of \textit{CAOTE}. In Section \ref{sec:results}, we present experiments and results. Section  
\ref{section:related_work} consists related works and Section \ref{sec:conclusion} consists the conclusion.

\begin{table}[]
    \caption{Overview of recent token eviction methods compared to CAOTE based on components used during eviction.}
    \label{tab:overview_table}
    \centering
    \resizebox{\columnwidth}{!}{%
    \begin{tabular}{l|c|c|c}
         Method &  Keys & Values & Minimize eviction error
         \\
         \hline
         H2O & \cmark & \xmark & \xmark
         \\
         TOVA & \cmark & \xmark & \xmark
         \\
         SnapKV & \cmark & \xmark & \xmark
         \\
         X+CAOTE & \cmark & \cmark & \cmark
    \end{tabular}
}
\end{table}

\section{Background}
\label{sec:background}


Token eviction is a popular methodology \cite{xiaoefficient, oren2024transformers, zhang2024h2o} for decoder-only transformer inferences that prevents KV-Cache from growing linearly as token generation continues by preventing less important tokens from being cached. This has dual benefits; first, it limits memory consumption for the KV-cache and second, it reduces the computational complexity of the attention mechanism. Here, we consider the case of processing the input prompt block-wise in resource-restricted environments. In this case, token eviction can save memory and computation not only in the generation phase, but also in the prefill phase, leading to a shorter time-to-first-token which is especially beneficial when the input prompt is extremely long. 

With a sequence of hidden states $X^{l}=[x_{1}^{l}, \dots x_{t}^{l}] \in \mathbb{R}^{t \times d}$, the transformer block updated the hidden states as follows:
\begin{align}
    X^{l+1} = \Phi_{\text{TRANS}}^{l}(X^{l}) = \phi^{l}_{\text{FF}}\left(\phi^{l}_{\text{SA}}(X^{l})\right)
\end{align}

where, $x^{l}_{j}$ is the hidden state of token $j$, $\phi_{FF}^{l}$ denotes the feedforward layer, and $\phi_{SA}^{l}$ denotes the self-attention layer, superscript $l$ denotes the layer index. 

For brevity, we omit normalization layers and skip connections. 

\paragraph{Prompt prefill} Given hidden-states $X^{l} \in \mathbb{R}^{t \times d}$ of $t$ tokens, the self-attention layer process inputs as follows:
\begin{align}
X^{l}_{\text{attn}} = \phi_{\text{sa}}(Q^{l},K^{l}, V^{l})= \underbrace{\text{Softmax}(Q^{l}(K^{l})^\top)}_{A^{l}}V^{l}
\end{align}
where, $Q^{l}, K^{l}, V^{l} \in \mathbb{R}^{t\times d}$ and $A^{l} \in \mathbb{R}^{t \times t}$. 
Here, we omit output layer projection and multi-head extention for brevity.

\paragraph{Block-wise prompt prefill} Instead of processing all tokens at once (resulting in attention matrix: $A^{l} \in \mathbb{R}^{t \times t}$), we can process tokens in block-size $m$,  which also helps in evicting tokens in small blocks instead of larger chunks.
\begin{align}
    & X^{l}_{\text{attn}, t+1:t+m} = 
    \\
    & \underbrace{\text{Softmax}(Q^{l}_{t+1:t+m}[K^{l}_{:t},\mathbf{K^{l}_{t+1:t+m}}]^{T})}_{A^{l}\in \mathbb{R}^{m \times (t+m)}}[V^{l}_{:t},\mathbf{V^{l}_{t+1,(t+m)}}] \nonumber
\end{align}
where, the new token hidden states are $X_{t+1:t+m}^{l}$ which are projected to $Q^{l}_{t+1:t+m}, K^{l}_{t+1:t+m}, V^{l}_{t+1:t+m}$

\textbf{Generation}. In autoregressive generation a single token is generated at each iteration
\begin{align}
     X^{l}_{\text{attn}, t+1} = \underbrace{\text{Softmax}(Q^{l}_{t+1}[K^{l}_{:t},\mathbf{K^{l}_{t+1}}]^{T})}_{A^{l}\in \mathbb{R}^{1\times (t+1)}}[V^{l}_{:t},\mathbf{V^{l}_{t+1}}] 
\end{align}

For resource-constraint hardware, single-inference KV cache prefill for a large number of input tokens may cause out-of-memory error or slow throughput. On the other hand, combining block-wise prefill with token eviction after processing each block of prompt can resolve this issue and improve throughput~\cite{holmes2024deepspeed, agrawal2023sarathi}.
For a block-size $m$, when $b$ tokens are initially processed, the usage of memory and computation power can always be kept within budget constraints by processing the next $m$ tokens and evicting the next $m$ tokens. In this case, attention matrix has size: $A^{l} \in \mathbb{R}^{m \times (b+m)}$.


Recent eviction methods use variants of attention scores from $A^{l}$ for evicting tokens by using a function (or operator) to map $f_{\text{score}}(A^{l}): \mathbb{R}^{m \times (b+m)}\to \mathbb{R}^{b+m}$, where $f_{\text{score},j}$ is the retention score (or score) for token $j$, the top-$b$ tokens are retained based on the score: $\underset{b}{\argmax} \; f_{\text{score}}(A^{l})$, where $b$ is the budget (maximum tokens allowed per layer). Examples of score functions, for \textit{H2O}, $f_{\text{score},j}=\Sigma_{i=1}^{m}A_{i,j}$, and for  \textit{TOVA}, $f_{\text{score},j}=A^{l}_{-1,j}$. The process of token eviction follows intuitive steps as shown: computing scores for newly processed tokens, choosing top-$b$ tokens, computing attention output using the top-$b$ tokens' hidden-state, we show the steps below:
\begin{align}
    A^{l}_{b+m} = & \text{Softmax}(Q^{l}_{b+1:b+m}[K^{l}_{:b},\mathbf{K^{l}_{b+1:b+m}}]^{T})
    \\
    i_{1},\dots, i_{b} = & \underset{j\in\{1,\dots,b\}}{\argmax} f_{\text{score},j}(A^{l}_{b+m})
    \\
     X^{l}_{\text{attn}} =& \text{Softmax}(Q^{l}_{b+1:b+m}
     (K^{l}_{i_{1}:i_{b}})^{T}) V^{l}_{i_{1}:i_{b}}
\end{align}
where the key in bold are correspond to the new tokens' hidden states being inserted. In above equation we assume that no new query token was evicted for ease of notation. During generation, the flow remains same with $m=1$.

\section{CAOTE: KV Caching through Attention Output-Based Token Eviction}
\label{sec:caote}
Our method is developed based on two key insights: (i) existing token eviction policies primarily rely on attention scores derived from queries and keys, and (ii) attention output is a linear combination of values. 
We find that optimizing for eviction error is same as change in attention output due to eviction, which can be computed in closed-form for each token during generation and can be used as the eviction score (\textit{CAOTE} score).

\begin{figure*}
    \centering
    \includegraphics[width=1.0\linewidth]{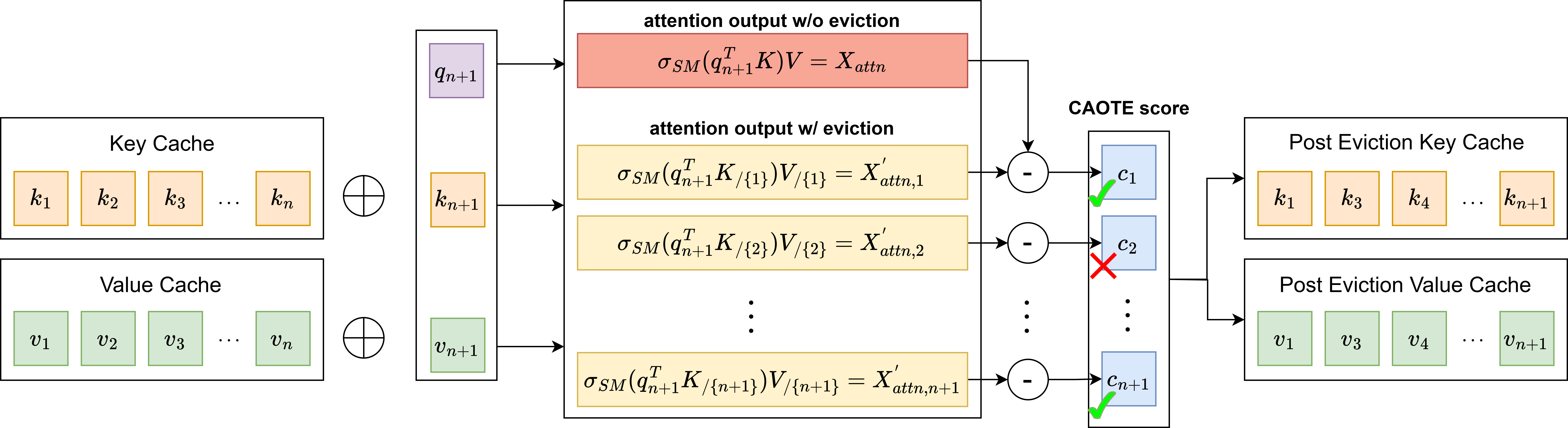}
    \caption{General flow of cache eviction when \textit{CAOTE} is integrated with existing cache eviction methods. We compute the impact of removal of each token to the attention output, this is same as eviction error (or \textit{CAOTE} score: $c_{1}, c_{2}, \dots c_{n+1}$). The token with the least impact is removed.
    }
    \label{fig:caote-block-diagram}
\end{figure*}

We first introduce \textit{CAOTE} 
in Subsection \ref{subsec:caote_derivation} \textcolor{black}{and how to compute eviction error in closed-form}. This is followed by a discussion of its meta-property, demonstrating its applicability with other score-based attention methods such as H2O \cite{zhang2024h2o} in Subsection \ref{subsec:meta_caote}. Finally, we propose an efficient approximation of \textit{CAOTE} in Subsection \ref{subsec:fast_caote}. The general workflow of \textit{CAOTE} is illustrated in Fig. \ref{fig:caote-block-diagram}, highlighting that the modifications to existing token eviction methods are minimal.


\subsection{CAOTE Score}
\label{subsec:caote_derivation}
The objective of our token eviction is to minimize eviction error: the change in attention output before and after eviction. We formulate eviction error for the generation scenario in which a single new token is inputted and therefore a single token needs to be evicted to maintain the budget $b$. Throughout the paper, we will use eviction error and \textit{CAOTE} score interchangeably.
 
 Given the attention scores of $b+1$ tokens $A=[\alpha_{1},\dots \alpha_{b+1}] \in \mathbb{R}^{1\times b+1}$ w.r.t. the last input token and the values: $V=[v_{1},\dots, v_{b+1}] \in \mathbb{R}^{d_{\text{head}} \times b+1}$, where $d_{\text{head}}$ is the head dimension. The \textit{CAOTE} score for token $j \in \{1, \dots, b+1\}$ is defined as (we ignore the layer and head dependence for simplicity).
\begin{equation}
    c_{j} = f^{\text{caote}}_j(A,V) 
    = \frac{\alpha_{j}}{1-\alpha_{j}}||\underbrace{VA^{T}}_{X_{\text{attn}}} - v_{j}||_{2}
    \label{eq:caote_score_based_on_tova}
\end{equation}

We proof that \textit{CAOTE} score is same as the eviction error. We define eviction error for token $j$ as the mean square error between attention output before and after eviction. Using the same setup as above:
\begin{align}
    e_{\text{eviction},j} = ||X_{\text{attn}} - X_{\text{attn},j}'||_{2}
    \label{eq:eviction_error}
\end{align}
where, $X_{\text{attn}}$ is attention output before eviction and $X_{\text{attn},j}'$ is attention output after eviction token $j$.
\begin{align}
    X_{\text{attn}} = & \, \alpha_{1}v_{1} + \alpha_{2}v_{2} + \dots \alpha_{b+1}v_{b+1} = VA^{T} \label{eq:attn_output_pre_eviction}
    \\
    X_{\text{attn},j}'= & \, \alpha_{1}'v_{1}+\dots\alpha_{j-1}'v_{j-1}
    \label{eq:attn_output_post_eviction_j}
    \\
     & + \alpha_{j+1}'v_{j+1}\dots \alpha_{b+1}'v_{b+1} \nonumber
\end{align}
where, $\alpha_{i}' \, \forall i\in \{1,\dots,j-1,j+1,\dots b+1\}$ in Eq. \eqref{eq:attn_output_post_eviction_j} is the post-eviction attention score to maintain the sum of the attention score property of sum equal to $1$. In the following we show the relation between the pre and post eviction attention score for token $i$ after the eviction of token $j$. 
\begin{theorem}
\label{thm:alpha_dash}
Given a new input token that exceeds the budget ($b$) by $1$. A token needs to be evicted. For any token $j$ being evicted, given the retention scores pre-eviction and post-eviction for any token $i \ne j$ as $\alpha_{i}$ and $\alpha_{i}'$ respectively, then the following relation holds:
\begin{equation}
    \alpha_{i}' = \frac{\alpha_{i}}{1-\alpha_{j}}
\end{equation}
\vspace{-4mm}
\end{theorem}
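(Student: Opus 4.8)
The plan is to derive the post-eviction attention scores directly from the softmax normalization, since the only thing that changes when token $j$ is removed is that its (unnormalized) exponential weight disappears from the denominator. First I would write the pre-eviction scores explicitly: let $z_k = \exp\big(q\,k_k^\top\big)$ be the unnormalized logit for token $k$ with respect to the last query $q$, so that $\alpha_k = z_k / \sum_{p=1}^{b+1} z_p =: z_k / Z$. The key observation is that evicting token $j$ and renormalizing the remaining $b$ tokens gives exactly $\alpha_i' = z_i / \sum_{p \neq j} z_p = z_i / (Z - z_j)$, because the softmax over the retained set uses the same logits, just restricted.

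Next I would combine these two expressions to eliminate the unnormalized quantities. From $\alpha_i = z_i / Z$ we get $z_i = \alpha_i Z$, and from $\alpha_j = z_j / Z$ we get $z_j = \alpha_j Z$, hence $Z - z_j = Z(1 - \alpha_j)$. Substituting into $\alpha_i' = z_i / (Z - z_j)$ yields
\begin{equation}
    \alpha_i' = \frac{\alpha_i Z}{Z(1-\alpha_j)} = \frac{\alpha_i}{1-\alpha_j},
\end{equation}
which is the claimed identity. I would also note the consistency check that $\sum_{i \neq j} \alpha_i' = \frac{1}{1-\alpha_j}\sum_{i \neq j}\alpha_i = \frac{1-\alpha_j}{1-\alpha_j} = 1$, confirming that the renormalized scores still form a valid attention distribution, and that this is well-defined precisely because $\alpha_j < 1$ whenever $b \geq 1$ (there is at least one other token carrying positive mass).

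There is essentially no hard part here — the argument is a one-line consequence of how softmax renormalization interacts with deleting a coordinate. The only subtlety worth flagging is the implicit modeling assumption that "eviction" means recomputing attention over the retained keys with the same query and key vectors (rather than, say, reusing stale normalization constants); once that is made explicit, the derivation is forced. If the paper later wants the multi-token generalization (block-wise prefill with $m$ evictions), the same telescoping works with $Z - \sum_{j \in \mathcal{E}} z_j$ in the denominator, but for the stated single-token case the above suffices.
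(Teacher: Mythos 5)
Your proposal is correct and follows essentially the same route as the paper's proof: both define the softmax normalization constant, subtract the evicted token's exponential from the denominator, and cancel to obtain $\alpha_i' = \alpha_i/(1-\alpha_j)$. The added consistency check that the renormalized scores sum to $1$ and the remark on well-definedness are minor embellishments, not a different argument.
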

\begin{proof}
    Let the last input token has index $n$, then we define \begin{align}
        S \triangleq & \sum_{l=1}^{n}\exp(q_{n}^{T}k_{l})
        \\
        S'_{j} \triangleq & 
         S - \exp(q_{n}^{T}k_{j})
    \end{align}
    The retention score for token $i$ after evicting token $j$ is 
    \begin{align}
        \alpha_{i}' = & \frac{\exp(q_{n}^{T}k_{i})}{S_{j}'}
        =  \frac{\exp(q_{n}^{T}k_{i})}{S - \exp(q_{n}^{T}k_{j})}
        =   \frac{\alpha_{i}}{1 - \alpha_{j}}
      \label{eq:X_attn_dash_comp}
    \end{align}
\end{proof}

\begin{theorem}
Given a new input token that exceeds the budget ($b$) by $1$. A token needs to be evicted. For any token $j$ being evicted, the eviction error from Eq. \eqref{eq:eviction_error} and CAOTE score from Eq. \eqref{eq:caote_score_based_on_tova} are exactly same:
\begin{align}
    c_{j} = e_{\text{eviction},j}
\end{align}
\end{theorem}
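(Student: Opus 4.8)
The plan is to substitute the post-eviction attention weights supplied by Theorem~\ref{thm:alpha_dash} into the expression for $X_{\text{attn},j}'$ and simplify the difference $X_{\text{attn}} - X_{\text{attn},j}'$ to a single scaled vector. First I would rewrite Eq.~\eqref{eq:attn_output_post_eviction_j} using $\alpha_i' = \alpha_i/(1-\alpha_j)$ for every $i \neq j$, factoring the common scalar out of the sum: $X_{\text{attn},j}' = \sum_{i\ne j}\alpha_i' v_i = \tfrac{1}{1-\alpha_j}\sum_{i\ne j}\alpha_i v_i = \tfrac{1}{1-\alpha_j}\bigl(VA^\top - \alpha_j v_j\bigr)$, where the last equality uses Eq.~\eqref{eq:attn_output_pre_eviction} to identify $\sum_i \alpha_i v_i = VA^\top$ and then subtracts the $j$-th term.

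Next I would compute $X_{\text{attn}} - X_{\text{attn},j}'$ by placing everything over the common denominator $1-\alpha_j$. The two $VA^\top$ contributions combine to $-\tfrac{\alpha_j}{1-\alpha_j}VA^\top$, leaving the term $\tfrac{\alpha_j}{1-\alpha_j}v_j$, so the difference collapses to $\tfrac{\alpha_j}{1-\alpha_j}(v_j - VA^\top)$. Taking the $\ell_2$ norm as in Eq.~\eqref{eq:eviction_error} and pulling the nonnegative scalar $\alpha_j/(1-\alpha_j)$ outside the norm yields $e_{\text{eviction},j} = \tfrac{\alpha_j}{1-\alpha_j}\,|VA^\top - v_j|_2$, which is precisely the definition of $c_j$ in Eq.~\eqref{eq:caote_score_based_on_tova}.

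I do not anticipate a genuine obstacle: once Theorem~\ref{thm:alpha_dash} is available, the result is a one-line algebraic manipulation. The only point worth a remark is the degenerate case $\alpha_j = 1$, which would make the rescaling $\alpha_i/(1-\alpha_j)$ ill-defined; this cannot occur because the softmax over $b+1 \ge 2$ tokens assigns strictly positive mass to the remaining tokens, and in any case $\alpha_j \to 1$ forces $VA^\top \to v_j$ so that the eviction error vanishes consistently. I would also note that, although the statement is phrased for the single-token generation step ($m=1$), the same identity holds row-by-row over a query block, so it transfers verbatim to the block-wise prefill setting.
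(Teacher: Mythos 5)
Your proposal is correct and follows essentially the same route as the paper: apply Theorem~\ref{thm:alpha_dash} to write $X_{\text{attn},j}' = \tfrac{1}{1-\alpha_j}(X_{\text{attn}} - \alpha_j v_j)$, then simplify the difference to $\tfrac{\alpha_j}{1-\alpha_j}(v_j - VA^\top)$ and take norms. Your added remarks on the $\alpha_j \to 1$ degeneracy and the row-wise extension to block prefill are sensible but not part of the paper's argument.
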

\begin{proof}
    Using Theorem \ref{thm:alpha_dash}, we can rewrite post-eviction attention output from Eq. \eqref{eq:attn_output_post_eviction_j} 
\begin{align}
    X_{\text{attn},j}' =& \frac{1}{1-\alpha_{j}}\big(\alpha_{1}v_{1}+\dots 
    + \alpha_{j-1}v_{j-1}
    \\ 
    &+ \alpha_{j+1}v_{j+1}+\dots \alpha_{b+1}v_{b+1}\big) \nonumber
    \\
    =& \frac{1}{1-\alpha_{j}}\big(X_{\text{attn}} - \alpha_{j}v_{j}\big)
    \label{eq:x_attn_dash_j}
\end{align}
Replacing Eq. \eqref{eq:x_attn_dash_j} in Eq. \eqref{eq:eviction_error}, we get
\begin{align}
    e_{\text{eviction},j} = & ||X_{\text{attn}} - X_{\text{attn},j}'||_{2} \nonumber
    \\
     =& \frac{\alpha_{j}}{1-\alpha_{j}} ||v_{j}-X_{\text{attn}}||_{2} = \frac{\alpha_{j}}{1-\alpha_{j}} ||VA^{T} - v_{j}||_{2} \nonumber
     \\
     = & c_{j}
    \label{eq:delta_X_attn_final}
\end{align}
Hence proved.
\end{proof}


 Using Eq. \eqref{eq:delta_X_attn_final} \textit{CAOTE} scores (or eviction error) for each token can be computed in parallel as the dependency on only on attention scores and value vectors. Note that this is the firsts formulation seamlessly integrating attention scores and value vectors into a single score. 
 Any norm can be used for computing \textit{CAOTE} score and based on empirical results we choose $L_2$-norm. Evicting multiple tokens using \textit{CAOTE} formulation is discussed in \cref{appendix:multi_token_caote}.



\subsection{\textit{CAOTE} with general score-based eviction methods}
\label{subsec:meta_caote}
The \textit{CAOTE} formulation allows the use of arbitrary scoring-based eviction methods to incorporate the values into their scoring mechanism, provided that the scores sum to 1.0. In practice, we can adjust the raw eviction scores without changing their relative order by simple normalizations (affine transformations). Let $H$ be the set of retention scores and $f^{\text{norm}}$ be the normalizing function. The \textit{CAOTE} score for general  eviction methods is given by:
\begin{align}
    c_{j} &=f^{\text{caote}}_{j}(f^{\text{norm}}(H),V)
    \\
    &= \frac{h_{j}^{\text{norm}}}{1-h_{j}^{\text{norm}}}||V(H^{\text{norm}})^{T}-v_{j}||_{2} 
    \label{eq:caote_score_general}
\end{align}
where, 
$h_{j}^{\text{norm}}=f^{\text{norm}}_{j}(H)$. We further discuss the generalization of \textit{CAOTE} to well-known token eviction methods in the following. 

\paragraph{\textit{CAOTE} for H2O}



We consider \textit{H2O} \cite{zhang2024h2o}, where the scores ($H = [h_{1}, \dots, h_{b+1}]$) are based on the sum of previous attention scores, leading to $\Sigma_{j=1}^{b+1} h_{j} > 1$ during generation-phase as proved in Theorem \ref{thm:h2o_scores} in \cref{subsec:h2o_scores}. 
In this case, simply dividing each token score by the sum of all scores maps the scores to the range $[0, 1]$ and ensures that new scores follow $\sum_{i=1}^{b+1} h_{i}^{\text{norm}} = 1$.
\begin{equation}
    h_{j}^{\text{norm}} = \frac{h_{j}}{\Sigma_{i=1}^{b+1} h_{i}}
    \label{eq:h2o_score_norm}
\end{equation}
For recent methods where all the scores are $\geq 0$, simply dividing by sum of all scores suffices. Note that for \textit{TOVA} \cite{oren2024transformers}, this summation is by default equal to $1$.

\subsection{\textit{FastCAOTE} Computation}
\label{subsec:fast_caote}
We also propose a compute-efficient version of \textit{CAOTE}, {\it FastCAOTE}, with negligible performance degradation and minimal compute overhead relative to \textit{CAOTE}. Here, the pre-eviction attention output ($X_{\text{attn}}$) is replaced with mean of values while everything else remains same, that is, \textit{CAOTE} score for token $j$ is:
\begin{align}
    c_{j} = \frac{\alpha_{j}}{1-\alpha_{j}}||\frac{1}{b+1}\Sigma_{i=1}^{b+1}v_{i} - v_{j}||_{2} 
\end{align}
We find empirically, that \textit{FastCAOTE} is highly correlated to \textit{CAOTE} by observing high ($\geq 0.8$) Spearman's correlation between them for each layer as shown in \cref{tab:correlation_caote_fast_caote}.

We provide theoretical inference time latency for using \textit{CAOTE} and \textit{FastCAOTE} in \cref{appendix:inference_latency}, where we observe that \textit{FastCAOTE} leads to minimal increase in latency. 

\section{Results}
\label{sec:results}
In this section, we demonstrate the efficacy of \textit{CAOTE} for boosting performance on state-of-the-art token eviction methods on a wide range of downstream benchmarks on recent frontier models: \texttt{Llama3} and \texttt{Qwen2.5}.
\subsection{Experiment Setup}

\paragraph{Tasks} We study the impact of \textit{CAOTE} on different token eviction methods by evaluating on LongBench \cite{bai-etal-2024-longbench}, covering single QA, multiple QA, single/multi-document summarization, synthetic, and code generation tasks.
We measure long-context perplexity on the Booksum dataset \cite{kryscinski2021booksum}, and lastly,  measure recall accuracy on Needle In A Haystack task \cite{liu2024lost,kamradt2023needle}. Details on context and generation lengths are in Appendix \cref{subsec:task_gen_len}

\paragraph{Baselines} We compare the performance of \textit{CAOTE} to various token eviction methods including:  \textit{H2O} \cite{zhang2024h2o}, \textit{TOVA} \cite{oren2024transformers}, and \textit{SnapKV} \cite{li2024snapkv}, on \texttt{Llama3} models: \texttt{Llama 3.2-3B-Instruct} and \texttt{Llama 3.1-8B-Instruct} \cite{dubey2024llama}, and \texttt{Qwen2.5} models: \texttt{Qwen 2.5-3B-Instruct} and \texttt{Qwen 2.5-7B-Instruct} \cite{yang2024qwen2} for all subsequent experiments. 


\paragraph{Budgets} We evaluated all methods with various KV cache budget sizes of $2048$, $4096$, $6144$, and $8192$, denoted by $2$k, $4$k, $6$k, and $8$k, respectively. 

\paragraph{Prompt consumption} Unlike other token eviction methods that assume to prefill prompt at once followed by KV cache eviction, we propose to consume tokens in block-wise manner as described in \cref{sec:background} with the block-size of $128$, i.e., at each inference of LLM during the prefill phase, there are $128$ new tokens incoming and being added to the cache, and $128$ tokens from the cache are evicted once the total number of tokens reaches the cache budget limit. 

\subsection{LongBench}
\begin{table*}[!t]
\caption{\textbf{LongBench results for Llama 3.1-8B and Llama 3.2-3B-Instruct. }
Higher number is better. 
We highlight the best performing methods within a given budget with \textbf{bold} and the second best with underline. 
} 
\label{table:longbench_reduced_llama_24}
\vspace{-5mm}
\begin{center}
\resizebox{\textwidth}{!}{
\begin{tabular}{clccccccccccccccccc}
\toprule

&
&
\multicolumn{3}{c}{Single Doc. QA} &
\multicolumn{3}{c}{Multi Doc. QA} &
\multicolumn{3}{c}{Summarization} &
\multicolumn{3}{c}{Few$\-$shot Learning} &
\multicolumn{2}{c}{Synthetic} &
\multicolumn{2}{c}{Code} &
\\

\cmidrule(lr){3-5}
\cmidrule(lr){6-8}
\cmidrule(lr){9-11}
\cmidrule(lr){12-14}
\cmidrule(lr){15-16}
\cmidrule(lr){17-18}

&
& 
{Narrative QA} & {Qasper} & {MF-en} & 
{HotpotQA} & {2WikiMQA} & {Musique} & 
{GovReport} & {QMSum} & {MultiNews} & 
{TREC} & {TriviaQA} & {SAMSum} & 
{PCount} & {PR-en} &
{Lcc} & {RB-P} & 
{Avg.} 
\\

\midrule

\multicolumn{2}{c}{Llama 3.1-8B} &
30.05
& 47.00 & 56.12 & 57.33 & 47.81 & 32.25 & 34.86 & 25.32 & 27.02 & 73.00 & 91.61 & 43.37 & 8.33 & 99.50 & 61.66 & 51.94 & 49.20 \\
\midrule

\multirow{10}{*}{2k} &
H2O & 
1.74 & 21.15 & 25.33 & 26.11 & 24.15 & 8.78 & 2.17 & 2.70 & 16.78 & 44.00 & 29.36 & 7.62 & 2.25 & 5.88 & 40.15 & 12.14 & 16.89 \\
\cmidrule(lr){2-19}

& 
\multicolumn{1}{r}{+ CAOTE} & 
14.32 & 38.34 & 45.97 & 37.77 & \textbf{42.51} & 22.06 & 29.57 & 15.11 & \textbf{27.02} &
62.00 & 63.60 & 27.34 & 2.00 & 15.50 & 56.99 & 32.87 & 33.31 \\

& 
\multicolumn{1}{r}{+ FastCAOTE} & 
15.15 & \textbf{41.27} & \textbf{46.6} & 39.91 & 40.02 & \textbf{24.55} & 30.05 & 16.19 & 26.95 & 63 & 62.39 & 26.86 & 3.08 & 17.5 & 56.87 & 34.75 & 34.07
\\

\cmidrule[0.8pt](lr){2-19}

&
TOVA & 
22.57 & 37.26 & 39.43 & 45.74 & 34.48 & 14.77 & 28.87 & 21.17 & 26.95 & 62.50 & 90.73 & 42.74 & 0.00 & 18.00 & \textbf{62.68} & 52.48 & 37.52 \\
\cmidrule(lr){2-19}

& 
\multicolumn{1}{r}{+ CAOTE} &
21.92 & 37.47 & 38.28 & 45.88 & 35.2 & 15 & 29.02 & 21.21 & 27.00 & 62.5 & 91.34 & 43.22 & 1.5 & 23 & 62.6 & \textbf{54.13} & 38.08 \\

& 
\multicolumn{1}{r}{+ FastCAOTE}& 
21.94 & 38.22 & 38.22 & \textbf{46.72} & 36.93 & 14.31 & 29.06 & 21.72 & 26.98 & 63 & \textbf{91.65} & \textbf{43.53} & 1.5 & 22 & 62.44 & 52.88 & 38.19 \\

\cmidrule[0.8pt](lr){2-19}

&
SnapKV & 
21.81 & 37.22 & 37.19 & 46.10 & 35.42 & 16.53 & 29.83 & 21.05 & 26.77 & 61.00 & 88.84 & 42.56 & \textbf{4.03} & 51.50 & 62.37 & 51.45 & 39.60 \\
\cmidrule(lr){2-19}

& 
\multicolumn{1}{r}{+ CAOTE} &
21.75 & 37.49 & 36.86 & 44.62 & 37.26 & 16.82 & \textbf{30.30} & 21.67 & 26.88 & 64 & 90.65 & 42.80 & 2.09 & 53.00 & 62.50 & 52.09 & \underline{40.05} \\

& 
\multicolumn{1}{r}{+ FastCAOTE} & 
\textbf{23.26} & 38.54 & 39.16 & 43.2 & 38.27 & 17.54 & 30.28 & \textbf{21.97} & 26.76 & \textbf{65.5} & 90.91 & 42.71 & 2.84 & \textbf{56.00} & 62.36 & 52.40 & \textbf{40.73} \\


\midrule

\multirow{10}{*}{4k} &
H2O & 
4.07 & 36.16 & 36.00 & 33.52 & 32.87 & 17.78 & 6.66 & 5.95 & 24.09 & 55.00 & 47.65 & 17.41 & 4.00 & 24.50 & 54.85 & 21.43 & 26.37 \\
\cmidrule(lr){2-19}

& 
\multicolumn{1}{r}{+ CAOTE} & 
20.28 & 46.08 & 51.45 & 47.38 & 46.05 & 30.89 & 33.39 & 20.8 & 26.93 & 69 & 80.12 & 38.27 & 4.31 & 32 & 59.22 & 40.51 & 40.42 \\

& 
\multicolumn{1}{r}{+ FastCAOTE} & 
24.4 & 44.32 & 48.11 & 48.19 & 43.69 & 21.12 & 31.55 & 22.36 & 26.98 & 65 & 91.18 & 43.11 & 2 & 46.5 & 61.62 & 53.35 & 42.09 \\

\cmidrule[0.8pt](lr){2-19}

&
TOVA & 
22.68 & 44.55 & 47.87 & 46.76 & 44.54 & 20.56 & 30.95 & 22.13 & 26.96 & 61.50 & 90.56 & 43.27 & 3.00 & 43.50 & 61.62 & 53.40 & 41.49 \\
\cmidrule(lr){2-19}

& 
\multicolumn{1}{r}{+ CAOTE} &
24.68 & 43.88 & 48.07 & 49.64 & 44.91 & 22.57 & 31.25 & 22.25 & 26.98 & 63 & 91.29 & 43.29 & 2.5 & 46.5 & 61.6 & 53.45 & 42.24 \\


& 
\multicolumn{1}{r}{+ FastCAOTE}& 
24.4 & 44.32 & 48.11 & 48.19 & 43.69 & 21.12 & 31.55 & 22.36 & 26.98 & 65 & 91.18 & 43.11 & 2 & 46.5 & 61.62 & 53.35 & 42.09 \\

\cmidrule[0.8pt](lr){2-19}

&
SnapKV & 
24.79 & 44.22 & 47.30 & 48.49 & 46.73 & 20.55 & 32.19 & 22.68 & 26.95 & 67.50 & 90.98 & 43.14 & 5.17 & 89.50 & 61.44 & 51.20 & 45.18 \\
\cmidrule(lr){2-19}

& 
\multicolumn{1}{r}{+ CAOTE} &
24.41 & 43.16 & 47.77 & 50.87 & 44.11 & 21.04 & 32.51 & 22.98 & 26.93 & 69 & 91.31 & 43.18 & 3.33 & 92 & 61.04 & 51.74 & \underline{45.34} \\

& 
\multicolumn{1}{r}{+ FastCAOTE} & 
24.12 & 44.59 & 47.39 & 50.82 & 44.07 & 22.38 & 32.33 & 22.92 & 27.01 & 69 & 91.31 & 43.53 & 4.58 & 94.5 & 61.31 & 52.11 & \textbf{45.75} \\

\midrule
\midrule
\addlinespace
\multicolumn{2}{c}{Llama 3.2-3B} &
23.76 & 40.23 & 50.09 & 50.69 & 42.29 & 26.84 & 33.09 & 24.30 & 25.21 & 72.50 & 90.11 & 42.58 & 3.00 & 96.50 & 56.22 & 56.52 & 45.87 \\
\midrule
\multirow{10}{*}{2k} & H2O & 1.63 & 19.96 & 20.20 & 18.02 & 19.56 & 2.88 & 0.78 & 1.55 & 15.97 & 41.00 & 21.97 & 9.83 & 0.50 & 0.50 & 39.71 & 13.91 & 14.25
\\
\cmidrule[0.8pt](lr){2-19}
& 
\multicolumn{1}{r}{+ CAOTE} & 6.38 & \textbf{34.36} & \textbf{40.6}  & 32.52 & 31.08 & 12.69  & 27.36 & 15.04 & 24.6  & 59 & 52.83 & 26.78 & 3.7 & 7.56 & 51.09 & 36.33 & 28.87
\\
& 
\multicolumn{1}{r}{+ FastCAOTE} & 7.27 & 34.23 & 39.74  & 32.22 & 30.08 & 12.63  & \textbf{27.86} & 15.48 & 25.15 & 60.5 & 53.09 & 26.94 & 2.17 & 8.12 & 51.2 & 35.06 & 28.86
\\
\cmidrule[0.8pt](lr){2-19}
& TOVA & 17.14 & 30.14 & 32.44 & 35.96 & 30.05 & 13.08 & 26.15 & 19.70 & 25.04 & 56.50 & 87.81 & 40.48 & 2.50 & 11.50 & 55.51 & 52.36 & 33.52 
\\
\cmidrule[0.8pt](lr){2-19}
&
\multicolumn{1}{r}{+ CAOTE} & 17.75 & 30.45 & 32.19 & 37.53 & 29.35 & 13.33 & 26.92 & 19.93 & 25.18 & 60.5 & 88.08 & \textbf{41.65} & 1.00 & 12.5 & 54.92 & 53.22 & 34.03
\\
& 
\multicolumn{1}{r}{+ FastCAOTE} & 17.93 & 30.52 & 33.1 & 37.01 & 30.7 & \textbf{13.88} & 26.39 & 20.28 & 24.96 & 60.5 & \textbf{88.95} & 41.27 & 2.00 & 12.5 & 55.65 & 53.56 & 34.33
\\
\cmidrule[0.8pt](lr){2-19}
& SnapKV & 17.38 & 31.37 & 31.48 & 37.77 & 30.05 & 11.54 & 27.03 & 19.93 & 24.97 & 59.00 & 88.13 & 40.48 & 3.50 & 32.50 & 56.32 & 55.91 & 35.46 
\\
\cmidrule[0.8pt](lr){2-19}
& \multicolumn{1}{r}{+CAOTE} & \textbf{19.11} & 33.12 & 31.09 & 38.68 & \textbf{32.09} & 12.31 & 27.48 & \textbf{20.38} & 25.20 & 64 & 87.7 & 40.78 & 2.5 & 35 & \textbf{57.03} & 56.21 & \underline{36.42}
\\
& \multicolumn{1}{r}{+FastCAOTE} & 18.96 & 32.97 & 33.61  & \textbf{39.00} & 31.36 & 12.35 & 27.48 & 20.15 & \textbf{25.24} & \textbf{65} & 87.2 & 40.7 & \textbf{4.5} & \textbf{36.5} & 56.06 & \textbf{57.12} & \textbf{36.76}
\\
\midrule
\multirow{10}{*}{4k} & H2O & 2.92 & 31.94 & 33.23 & 24.49 & 28.08 & 7.55 & 5.44 & 6.30 & 22.77 & 53.00 & 38.85 & 20.33 & 1.50 & 7.50 & 51.23 & 22.94 & 22.38
\\
\cmidrule[0.8pt](lr){2-19}
& \multicolumn{1}{r}{+CAOTE} & 12.87 & 40.79 & 47.56 & 40.28 & 39.07 & 16.61 & 30.82 & 19.65 & 25.12  & 65.5 & 69.29 & 34.16 & 2.35 & 17 & 55.32 & 45.12 & 35.09
\\
& \multicolumn{1}{r}{+FastCAOTE} & 11.85 & 40.41 & 47.93  & 40.81 & 38.93 & 17.36  & 31.22 & 19.67 & 25.1 & 65 & 71.25 & 34.89 & 3.5 & 15 & 55.5 & 44.3 & 35.17
\\
\cmidrule[0.8pt](lr){2-19}
& TOVA & 20.52 & 39.53 & 42.47 & 44.12 & 38.42 & 18.22 & 29.36 & 21.36 & 24.96 & 63.50 & 88.98 & 41.50 & 3.00 & 23.50 & 55.72 & 56.66 & 38.24
\\
\cmidrule[0.8pt](lr){2-19}
& \multicolumn{1}{r}{+CAOTE} &
19.59 & 39.79 & 42.03 & 45.25 & 37.07 & 19.3 & 29.39 & 21.57 & 24.92 & 63 & 89.14 & 41.77 & 3.00 & 29.5 & 55.68 & 56.19 & 38.57
\\
& \multicolumn{1}{r}{+FastCAOTE} &
19.77 & 39.23 & 43.13  & 45.28 & 37.04 & 18.82 & 29.25 & 21.94 & 24.96 & 63 & 88.64 & 41.92 & 3.5 & 29 & 55.68 & 56.41 & 38.6
\\
\cmidrule[0.8pt](lr){2-19}
& SnapKV & 19.85 & 39.22 & 39.86 & 46.70 & 37.98 & 16.64 & 29.79 & 21.21 & 25.01 & 65.50 & 89.35 & 40.95 & 2.50 & 62.50 & 55.74 & 56.88 & 40.60 
\\
\cmidrule[0.8pt](lr){2-19}
& \multicolumn{1}{r}{+CAOTE} &
20.1 & 39.74 & 41.01 & 45.64 & 38.26 & 18.64 & 30.07 & 21.53 & 24.98 & 67.5 & 89.08 & 41.78 & 3.00 & 67  & 55.73 & 56.71 & \underline{41.30} 
\\
& \multicolumn{1}{r}{+FastCAOTE} &
19.68 & 39.24 & 41.03  & 44.52 & 39.09 & 18.62 & 30.15 & 21.72 & 24.97 & 67 & 88.86 & 41.24 & 3.00 & 71 & 55.67 & 56.64 & \textbf{41.40}
\\
\bottomrule
\end{tabular}
}
\end{center}
\end{table*}
\begin{table*}[!t]
\caption{\textbf{LongBench results for Qwen 2.5-7B/2.5-3B-Instruct.}
Higher number is better. 
We highlight the best performing methods within a given budget with \textbf{bold} and the second best with underline. 
} 
\label{table:longbench_reduced_qwen_24}
\vspace{-5mm}
\begin{center}
\resizebox{\textwidth}{!}{
\begin{tabular}{clccccccccccccccccc}
\toprule

&
&
\multicolumn{3}{c}{Single Doc. QA} &
\multicolumn{3}{c}{Multi Doc. QA} &
\multicolumn{3}{c}{Summarization} &
\multicolumn{3}{c}{Few$\-$shot Learning} &
\multicolumn{2}{c}{Synthetic} &
\multicolumn{2}{c}{Code} &
\\

\cmidrule(lr){3-5}
\cmidrule(lr){6-8}
\cmidrule(lr){9-11}
\cmidrule(lr){12-14}
\cmidrule(lr){15-16}
\cmidrule(lr){17-18}

&
& 
{Narrative QA} & {Qasper} & {MF-en} & 
{HotpotQA} & {2WikiMQA} & {Musique} & 
{GovReport} & {QMSum} & {MultiNews} & 
{TREC} & {TriviaQA} & {SAMSum} & 
{PCount} & {PR-en} &
{Lcc} & {RB-P} & 
{Avg.} 
\\

\midrule

\multicolumn{2}{c}{Qwen 2.5-7B} &
 15.75 & 16.94 & 32.38 & 11.89 & 11.88 & 7.95 & 34.33 & 19.91 & 22.67 & 65.5 & 87.05 & 44.75 & 4.22 & 93.08 & 57.74 & 61.84 & 36.74
 \\
\midrule

\multirow{10}{*}{2k} &
H2O & 
2.39 & 7.29 & 12.42 & 8.55 & 11.06 & 2.73 & 3.62 & 6.6 & 15.69 & 42.5 & 28.21 & 10.63 & 0.65 & 0 & 35.1 & 18.77 & 12.89
\\
\cmidrule(lr){2-19}
& 
\multicolumn{1}{r}{+ CAOTE} & 
4.55 & 14.3 & 27.58 & 11.33 & 13.55 & 7.76 & 26.65 & 15.62 & 22.93 & 57 & 49.78 & 27.74 & 1.54 & 11.08 & 51.45 & 32.7 & 23.47 
\\
& 
\multicolumn{1}{r}{+ FastCAOTE} & 
4.8 & 12.79 & 28.72 & 12.94 & 13.25 & 7.53 & 27.06 & 14.46 & 22.84 & 59 & 48.23 & 26.4 & 2.53 & 11.54 & 52.85 & 32.93 & 23.62
\\

\cmidrule[0.8pt](lr){2-19}

&
TOVA & 
8.49 & 14.01 & 21.04 & 14 & 11.51 & 5.09 & 27.43 & 17.84 & 22.83 & 56.5 & 79.56 & 40.55 & 2.43 & 9.29 & 55.99 & 56.15 & 27.67 
\\
\cmidrule(lr){2-19}

& 
\multicolumn{1}{r}{+ CAOTE} &
10.46 & 14.82 & 25.06 & 14.62 & 11.73 & 6.01 & 27.66 & 18.02 & 22.78 & 57.5 & 79.39 & 40.87 & 2.5 & 11.25 & 56.22 & 56.51 & 28.46 
\\

& 
\multicolumn{1}{r}{+ FastCAOTE}& 
10.08 & 13.58 & 25.28 & 14.44 & 12.14 & 5.24 & 27.34 & 18.31 & 23.11 & 55.5 & 78.51 & 41.67 & 2.7 & 10.54 & 56.56 & 58.05 & 28.32
\\

\cmidrule[0.8pt](lr){2-19}

&
SnapKV & 
11.6 & 12.45 & 23.66 & 12.38 & 10.64 & 7.03 & 27.57 & 18.27 & 22.85 & 58 & 81.78 & 41.13 & 3.76 & 19.42 & 55.83 & 56.53 & 28.93
\\
\cmidrule(lr){2-19}

& 
\multicolumn{1}{r}{+ CAOTE} &
 14.02 & 12.23 & 24.55 & 16.45 & 10.35 & 8.59 & 27.77 & 18.91 & 22.87 & 56 & 80.58 & 40.43 & 2.38 & 21.52 & 55.17 & 56.03 & \underline{29.24}
 \\

& 
\multicolumn{1}{r}{+ FastCAOTE} & 
 14.26 & 14.11 & 24.11 & 15.31 & 11.35 & 7.88 & 27.95 & 18.86 & 22.74 & 56.5 & 80.92 & 41.49 & 3.8 & 22.42 & 55.89 & 57.43 & \textbf{29.69}
 \\


\midrule

\multirow{10}{*}{4k} &
H2O & 
1.99 & 11.92 & 19.88 & 10.24 & 10.12 & 4.73 & 9.08 & 10.14 & 20.85 & 51.00 & 37.37 & 20.57 & 3.16 & 6.43 & 52.14 & 29.09 & 18.67
\\
\cmidrule(lr){2-19}

& 
\multicolumn{1}{r}{+ CAOTE} & 
4.78 & 18.06 & 32.49 & 16.23 & 17.28 & 9.57 & 29.81 & 18.04 & 22.86 & 59.5 & 63.05 & 36.91 & 2.7 & 28.25 & 55.13 & 42.42 & 28.57
\\

& 
\multicolumn{1}{r}{+ FastCAOTE} & 
5.69 & 16.99 & 32.62 & 18.22 & 16.58 & 10.48 & 30.3 & 17.71 & 22.88 & 59.5 & 62.95 & 36.29 & 2.1 & 27.65 & 56.3 & 40.65 & 28.56
\\

\cmidrule[0.8pt](lr){2-19}

&
TOVA & 
12.83 & 17.03 & 27.01 & 16.8 & 13.37 & 8.05 & 29.21 & 19.05 & 22.73 & 58.5 & 82.67 & 42.71 & 1.67 & 15 & 56.69 & 56.59 & 29.99
\\
\cmidrule(lr){2-19}

& 
\multicolumn{1}{r}{+ CAOTE} &
12.97 & 14.99 & 27.53 & 17.94 & 12.93 & 9.21 & 29.76 & 19.7 & 22.92 & 58 & 82.03 & 43.14 & 2.15 & 17.25 & 57.32 & 59.37 & 30.98
\\ 

& 
\multicolumn{1}{r}{+ FastCAOTE}& 
14.52 & 16.71 & 26.97 & 18.73 & 13.84 & 9.59 & 29.47 & 19.45 & 22.87 & 59.5 & 82.96 & 42.42 & 2.6 & 20.33 & 57.22 & 58.42 & 30.98
\\

\cmidrule[0.8pt](lr){2-19}

&
SnapKV & 
14.35 & 13.45 & 28.28 & 16.33 & 11.74 & 8.12 & 29.71 & 19.18 & 22.82  & 57 & 83.8 & 43.27 & 2.41 & 39.83 & 58.12 & 58.67 & 31.69
\\
\cmidrule(lr){2-19}

& 
\multicolumn{1}{r}{+ CAOTE} &
15.07 & 14.34 & 28.7 & 16.7 & 12.89 & 10.54 & 30.03 & 19.58 & 22.73 & 59.5 & 83.12 & 42.56 & 3.17 & 55.92 & 57.34 & 58.85 & \textbf{33.19}
\\

& 
\multicolumn{1}{r}{+ FastCAOTE} & 
17.12 & 14.69 & 27.6 & 17.52 & 13.69 & 9.96 & 30.24 & 20.02 & 22.88 & 58.5 & 81.13 & 42.31 & 4.06 & 53.33 & 57.51 & 58.77 & \underline{33.08}
\\

\midrule
\midrule
\addlinespace
\multicolumn{2}{c}{Qwen 2.5-3B} &
18.08 & 22.49 & 39.72 & 27.86 & 20.45 & 18.93 & 32.8 & 23.74 & 24.89 & 67.5 & 85.05 & 43.88 & 5 & 40.97 & 51.91 & 47.53 & 35.68
\\
\midrule
\multirow{10}{*}{2k} & H2O & 1.8 & 9.18 & 11.62 & 8.54 & 7.31 & 2.77 & 5.93 & 6.99 & 16.89 & 38 & 21.87 & 7.69 & 1 & 3 & 37.36 & 22.9 & 12.68
\\
\cmidrule[0.8pt](lr){2-19}
& 
\multicolumn{1}{r}{+ CAOTE} & 6.9 & 22.71 & 28.09 & 15.23 & 18.19 & 4.95 & 29.53 & 17.68 & 24.74 & 52.5 & 45.81 & 26.95 & 1.92 & 6.16 & 45.81 & 36.48 & 23.98 
\\
& 
\multicolumn{1}{r}{+ FastCAOTE} & 7.03 & 22.37 & 28.88 & 15.34 & 16.95 & 5.19 & 29.13 & 18.06 & 25.03 & 54.5 & 46.35 & 25.35 & 2.23 & 7.22 & 45.7 & 36.59 & 24.12
\\
\cmidrule[0.8pt](lr){2-19}
& TOVA & 11.69 & 14.94 & 25.33 & 17.29 & 12.58 & 5.91 & 26.67 & 21.49 & 24.78 & 51.5 & 68.8 & 41.79 & 0.23 & 6 & 49.79 & 48.6 & 26.71
\\
\cmidrule[0.8pt](lr){2-19}
&
\multicolumn{1}{r}{+ CAOTE} & 11.17 & 15.23 & 27.42 & 18.94 & 13.1 & 6.94 & 27.01 & 21.62 & 24.86 & 57.5 & 68.38 & 42.11 & 0.82 & 4.88 & 49.36 & 48.13 & 27.34
\\
& 
\multicolumn{1}{r}{+ FastCAOTE} & 11.04 & 15.36 & 27.72 & 19.8 & 13.65 & 6.37 & 27.17 & 22.08 & 24.64 & 57 & 69.13 & 42.48 & 0.77 & 5.25 & 48.36 & 48.58 & 27.46
\\
\cmidrule[0.8pt](lr){2-19}
& SnapKV & 11.7 & 13.91 & 24.28 & 14.8 & 10.89 & 7.42 & 27.4 & 21.63 & 24.64 & 54.5 & 75.35 & 42.72 & 2.5 & 18.33 & 49.65 & 50.59 & 28.14
\\
\cmidrule[0.8pt](lr){2-19}
& \multicolumn{1}{r}{+CAOTE} & 12.69 & 14.88 & 26.16 & 13.93 & 12.21 & 7.07 & 27.48 & 20.99 & 24.75 & 61 & 75.58 & 42.08 & 4 & 21.29 & 49.94 & 52.38 & \textbf{29.15}
\\
& \multicolumn{1}{r}{+FastCAOTE} & 12.03 & 14.56 & 24.82 & 14.66 & 10.83 & 7.89 & 27.51 & 20.98 & 24.67 & 62.5 & 75.51 & 41.53 & 2 & 17 & 49.02 & 50.83 & \underline{28.52}
\\
\midrule
\multirow{10}{*}{4k} & H2O & 
2.82 & 17.34 & 23.27 & 10.18 & 10.47 & 3.03 & 11.06 & 10.73 & 22.93 & 50.75 & 34.93 & 18.03 & 4.35 & 7.32 & 47.74 & 29.42 & 19.02
\\
\cmidrule[0.8pt](lr){2-19}
& \multicolumn{1}{r}{+CAOTE} & 
7.63 & 24.16 & 35.29 & 20.17 & 17.67 & 12.61 & 31.14 & 19.04 & 25.01 & 62.5 & 64.84 & 34.19 & 4.25 & 18.37 & 49.79 & 41.45 & 29.26
\\
& \multicolumn{1}{r}{+FastCAOTE} & 
8.58 & 23.45 & 33.14 & 21.72 & 16.11 & 12.26 & 31.11 & 19.76 & 25.04 & 62 & 65.01 & 35.15 & 4.6 & 17.88 & 50.05 & 40.03 & 29.12
\\
\cmidrule[0.8pt](lr){2-19}
& TOVA & 
12.19 & 18.31 & 32.56 & 20.58 & 13.8 & 7.74 & 28.82 & 22.27 & 24.98 & 59 & 80.66 & 43.05 & 1.11 & 9.56 & 49.93 & 46.74 & 29.46
\\
\cmidrule[0.8pt](lr){2-19}
& \multicolumn{1}{r}{+CAOTE} &
13.16 & 18.67 & 30.74 & 19.33 & 15.7 & 7.32 & 28.93 & 22.14 & 24.91 & 59.5 & 78.54 & 43.57 & 1.55 & 8.25 & 49.4 & 47.31 & 29.31
\\
& \multicolumn{1}{r}{+FastCAOTE} &
12.2 & 18.55 & 32.29 & 19.3 & 15.13 & 7.23 & 29.12 & 22.44 & 24.97 & 60 & 78.8 & 43.12 & 1.5 & 10.25 & 49.6 & 47.74 & 29.52
\\
\cmidrule[0.8pt](lr){2-19}
& SnapKV & 
12.98 & 2.21 & 31.77 & 18.33 & 14.41 & 10.83 & 29.14 & 22.38 & 24.89 & 61 & 84.17 & 42.63 & 3.75 & 25.42 & 50.22 & 48.77 & 30.18
\\
\cmidrule[0.8pt](lr){2-19}
& \multicolumn{1}{r}{+CAOTE} &
13.65 & 20.35 & 32.62 & 19.36 & 15.27 & 11.42 & 29.47 & 22.44 & 24.78 & 64 & 82.6 & 43 & 4.25 & 24.46 & 50.37 & 49.28 & \textbf{31.71} 
\\
& \multicolumn{1}{r}{+FastCAOTE} &
13.46 & 19.92 & 32.53 & 20.44 & 13.64 & 8.44 & 29.56 & 22.14 & 24.93 & 64.5 & 82.73 & 43.26 & 2 & 24.58 & 50.14 & 50.08 & \underline{31.40}
\\
\bottomrule
\end{tabular}
}
\end{center}
\end{table*}
We present the accuracy of \texttt{Llama 3.1-8B-Instruct}, \texttt{Llama 3.2-3B-Instruct} and, \texttt{Qwen 2.5-3B-Instruct}, \texttt{Qwen 2.5-7B-Instruct} using baseline eviction methods with budget of $2$k, $4$k, both with and without \textit{CAOTE} in \cref{table:longbench_reduced_llama_24} and \cref{table:longbench_reduced_qwen_24}. We observe that the best average performance is given by \textit{SnapKV-FastCAOTE} for the \texttt{Llama3} models, while for \texttt{Qwen2.5} models \textit{SnapKV-CAOTE} performs the best. \textit{H2O} shows $>30\%$ improvement with \textit{CAOTE}, while \textit{TOVA, SnapKV} also show overall improvements, making their average accuracy closer to dense accuracy. Moreover, we obserive that for some \textbf{QA tasks (Qasper, MF-en, Musique, 2WikiMQA), \textit{H2O-(Fast)CAOTE} performs best}. We have bolded the best accuracy eviction method for $2$k budget in \cref{table:longbench_reduced_llama_24}.

Additional results for the $6$k, $8$k budget are shown in \cref{table:longbench_reduced_llama_68}, \cref{table:longbench_reduced_qwen_68} for \texttt{Llama3} and \texttt{Qwen2.5} respectively, in \cref{subsec:appendix_results_longbench}, which follow a trend similar to the $2$k, $4$k budgets.

\subsection{Perplexity}
\begin{table*}[t]
\vspace{-2mm}
\caption{\textbf{Perplexity difference between different eviction methods with dense baseline.} The lower is better. Negative entry in table means the method performs better than dense baseline. The PPL of Llama 3.2-3B-Instruct and Llama 3.1-8B-Instruct is 15.4911 and 9.833 respectively.}
\vspace{-2mm}
\label{table:PPL_wide}
\begin{center}\resizebox{0.8\textwidth}{!}{%
\begin{tabular}{cccccccccc}

\toprule

\multirow{2}{*}{Budget} & \multicolumn{3}{c}{H2O} & \multicolumn{3}{c}{TOVA} & 
\multicolumn{3}{c}{SnapKV}
\\

\cmidrule[0.5pt](lr){2-4}
\cmidrule[0.5pt](lr){5-7}
\cmidrule[0.5pt](lr){8-10}

& & +CAOTE & +FastCAOTE & & +CAOTE & +FastCAOTE & & +CAOTE & +FastCAOTE
\\

\midrule[0.7pt]
\multicolumn{10}{c}{Llama 3.1-8B-Instruct}
\\
\midrule

2k & 2.007 & 1.884 & 1.891 & -0.046 & -0.088 & -0.085 & -0.019 & -0.097 & \textbf{-0.098}
\\ 
4k & 1.284 & 1.079 & 1.061 & -0.047 & -0.060 & -0.058 & -0.0483 & \textbf{-0.080} & -0.079
\\
6k & 0.843 & 0.716 & 0.703 & -0.035 & -0.0366 & \textbf{-0.085} & -0.036 & -0.043 & -0.045
\\

\midrule[0.7pt]
\multicolumn{10}{c}{Llama 3.2-3B-Instruct}
\\
\midrule

2k & 3.814 & 3.561 & 3.563 & 0.493 & 0.442 & \textbf{0.432} & 0.555 & 0.451 & 0.435
\\ 
4k & 2.460 & 2.142 & 2.128 & 0.175 & 0.150 & \textbf{0.144} & 0.223 & 0.152 & 0.144
\\
6k & 1.369 & 1.219 & 1.187 & 0.065 & 0.057 & 0.057 & 0.076 & 0.056 & \textbf{0.044}
\\
8k & 0.589 & 0.462 & 0.448 & 0.023 & 0.012 & 0.011 & 0.020 & 0.007 & \textbf{0.005}
\\
\bottomrule
\end{tabular}
}%
\vspace{-5mm}
\end{center}
\end{table*}
We use the Booksum dataset \cite{kryscinski2021booksum} to measure generation perplexity of different eviction methods for various budgets. 
In \cref{table:PPL_wide}, we show perplexity gap between a model using a given eviction strategy and that of the model without token eviction with cache budgets of $2$k, $4$k and $6$k. 
We observe that when \textit{CAOTE} is applied to existing eviction methods, the perplexity either improves or surpasses the perplexity of the baseline model. 
\textit{TOVA-FastCAOTE}, \textit{SnapKV-CAOTE}, and \textit{SnapKV-FastCAOTE} perform best for $6$k, $4$k, $2$k budgets, respectively, for \texttt{Llama 3.1-8B-Instruct}; for \texttt{Llama 3.2-3B-Instruct}, \textit{TOVA-FastCAOTE} performs best with $2$k and $4$k budgets and \textit{SnapKV-FastCAOTE} beats other methods using $6$k and $8$k. Perplexity results for \texttt{Qwen2.5} models are shown in \cref{table:PPL_qwen_wide} in \cref{subsec:appendix_results_ppl}.

\subsection{Needle In A HayStack}
\vspace{1mm}
\begin{figure*}[ht!]
     \centering
     \begin{subfigure}[b]{0.33\linewidth}
         \centering
         \includegraphics[width=\linewidth]{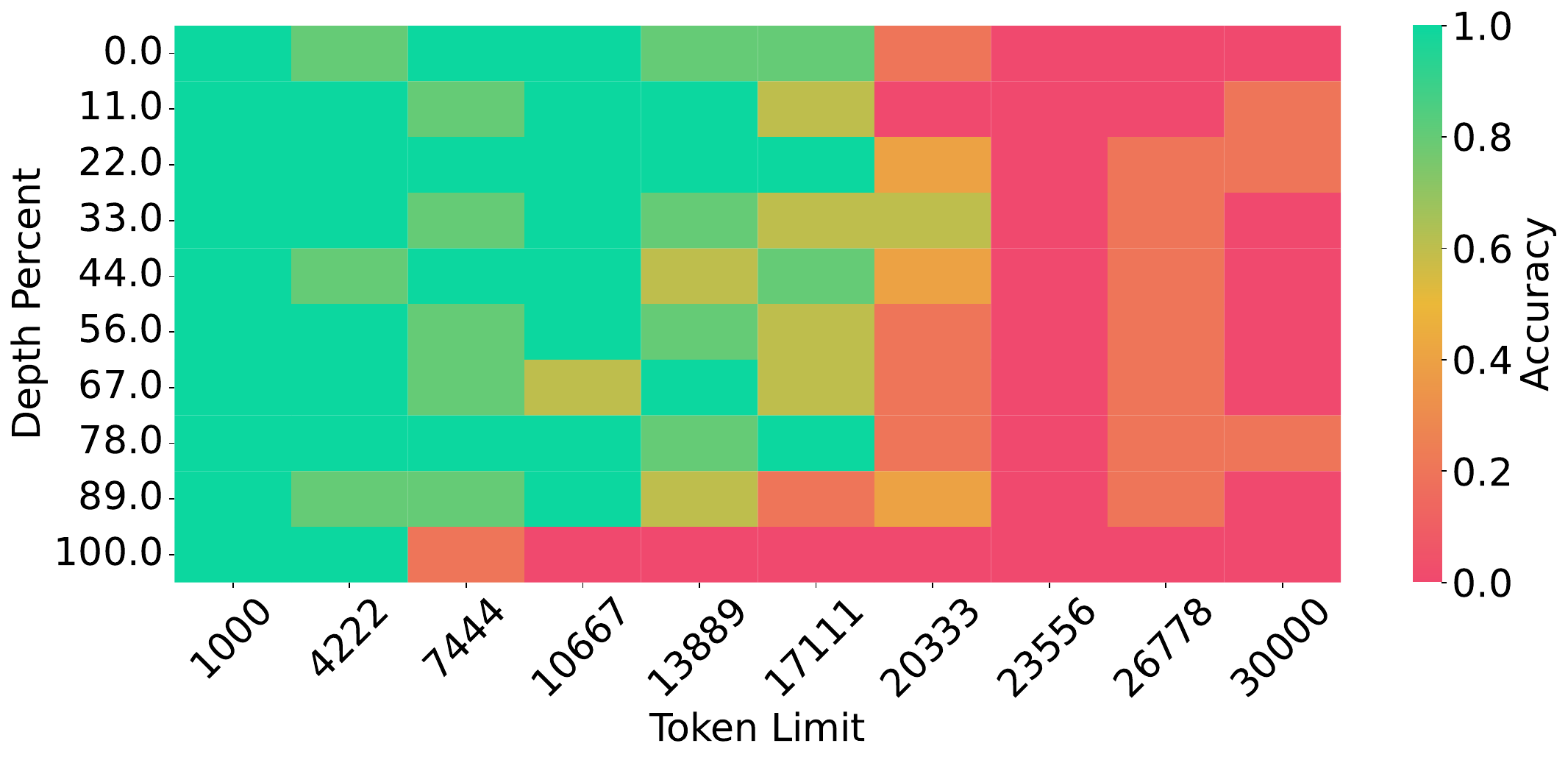}
         \caption{H2O}
     \end{subfigure}
     \hfill
     \begin{subfigure}[b]{0.33\linewidth}
         \centering
         \includegraphics[width=\linewidth]{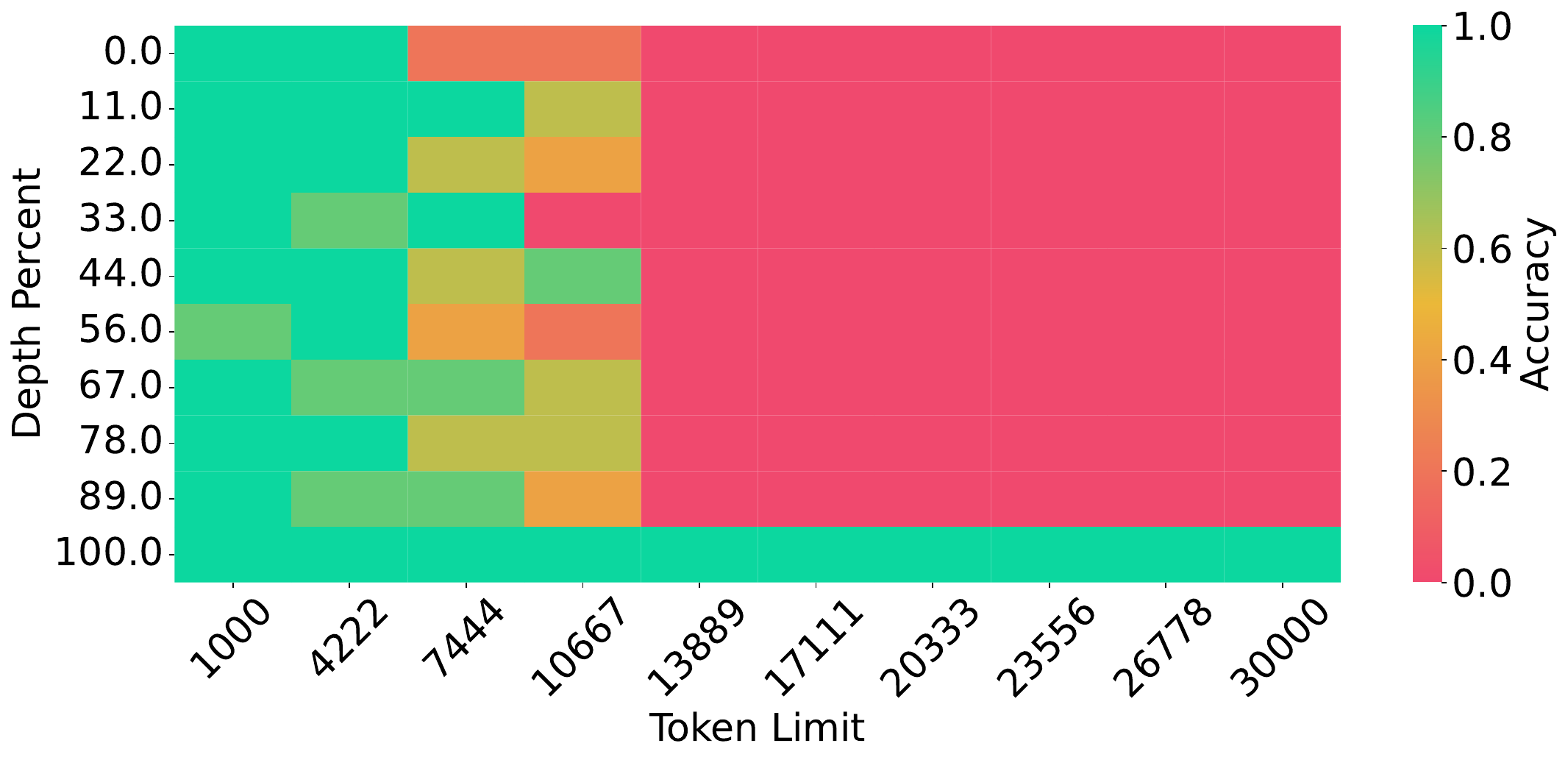}
         \caption{TOVA}
     \end{subfigure}
     \begin{subfigure}[b]{0.33\linewidth}
         \centering
         \includegraphics[width=\linewidth]{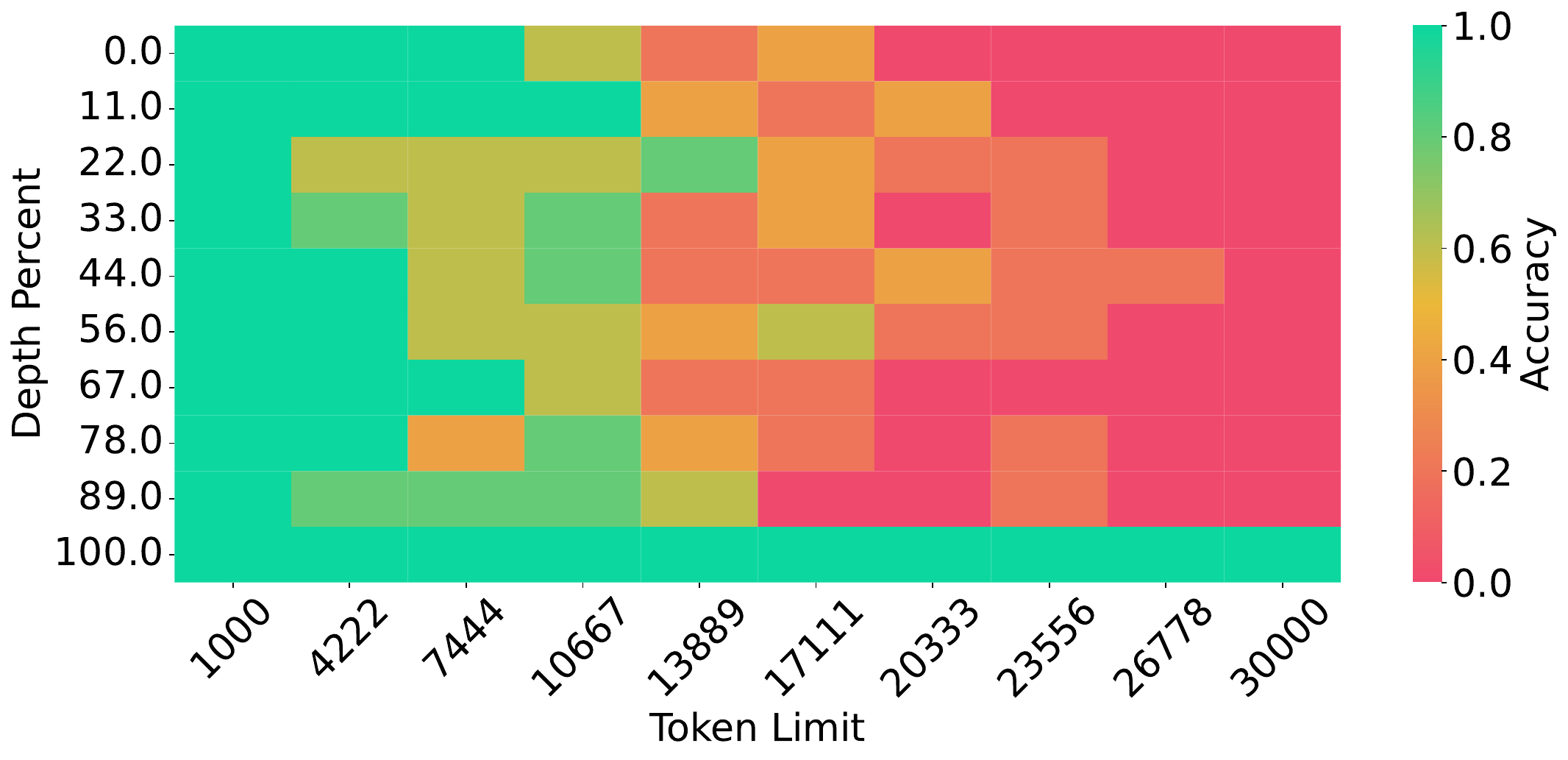}
         \caption{SnapKV}
    \end{subfigure}
    \\ 
     \begin{subfigure}[b]{0.33\linewidth}
         \centering
         \includegraphics[width=\linewidth]{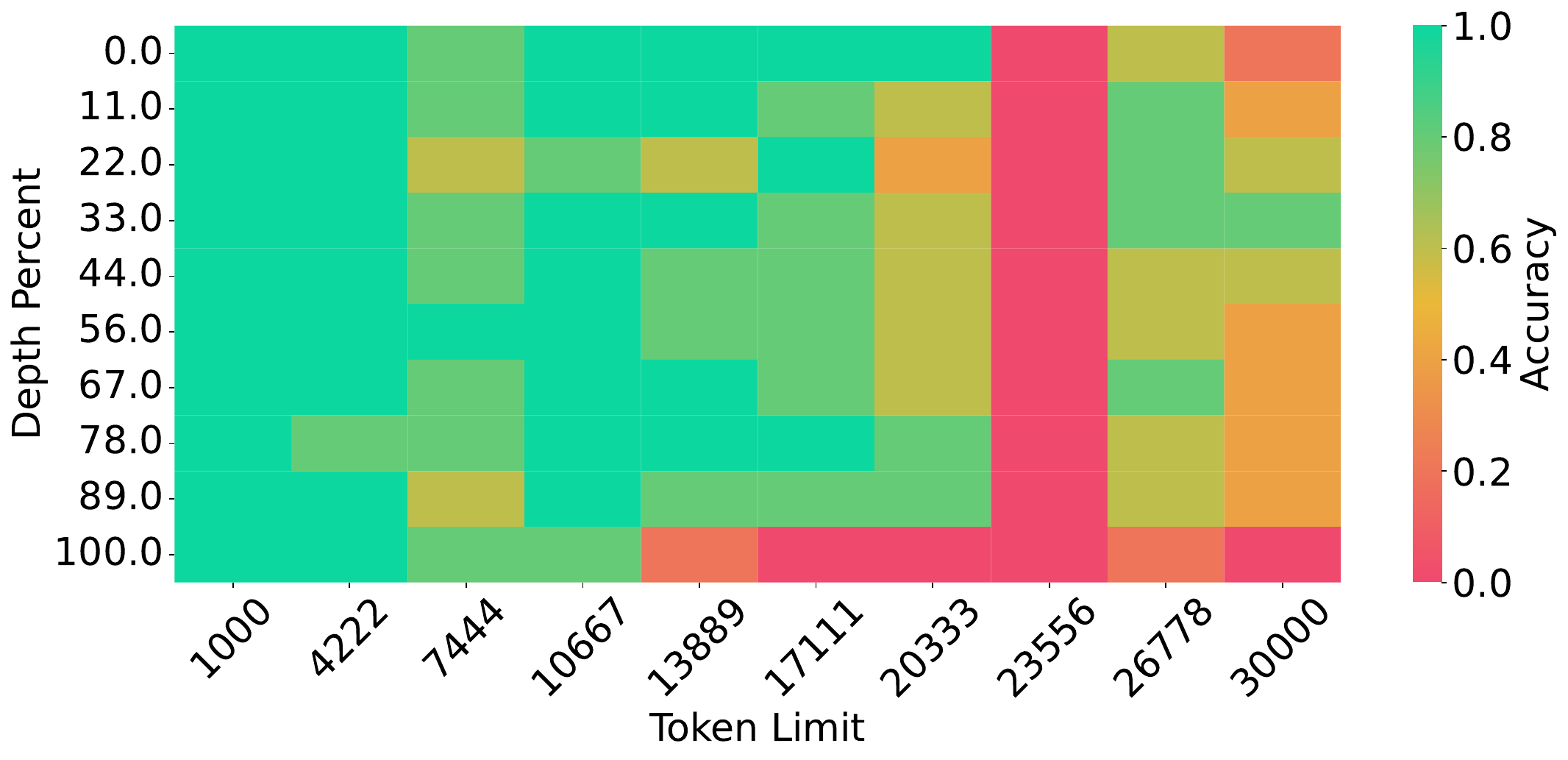}
         \caption{H2O-CAOTE}
     \end{subfigure}
     \hfill
     \begin{subfigure}[b]{0.33\linewidth}
         \centering
         \includegraphics[width=\linewidth]{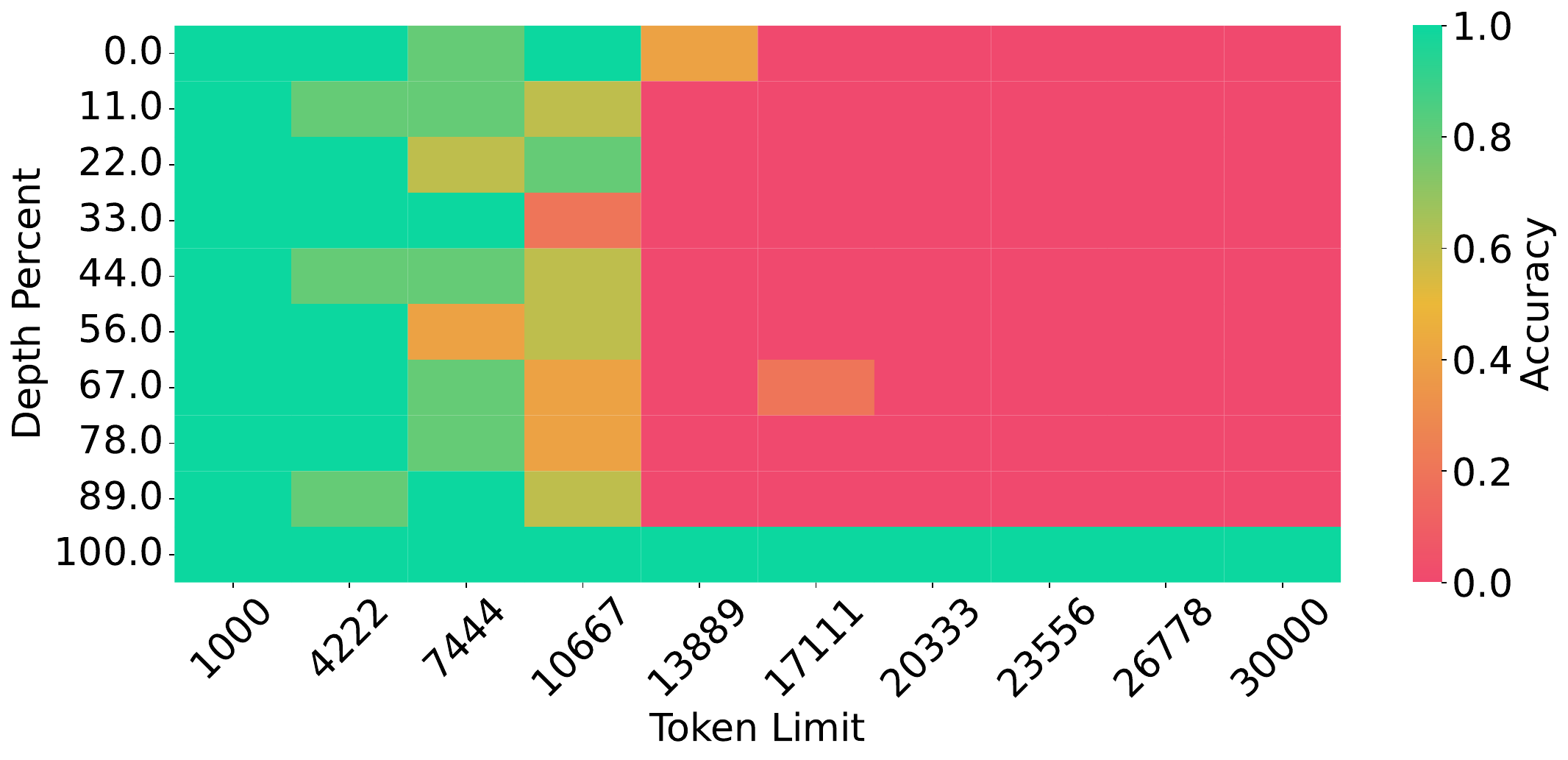}
         \caption{TOVA-CAOTE}
     \end{subfigure}
     \begin{subfigure}[b]{0.33\linewidth}
         \centering
         \includegraphics[width=\linewidth]{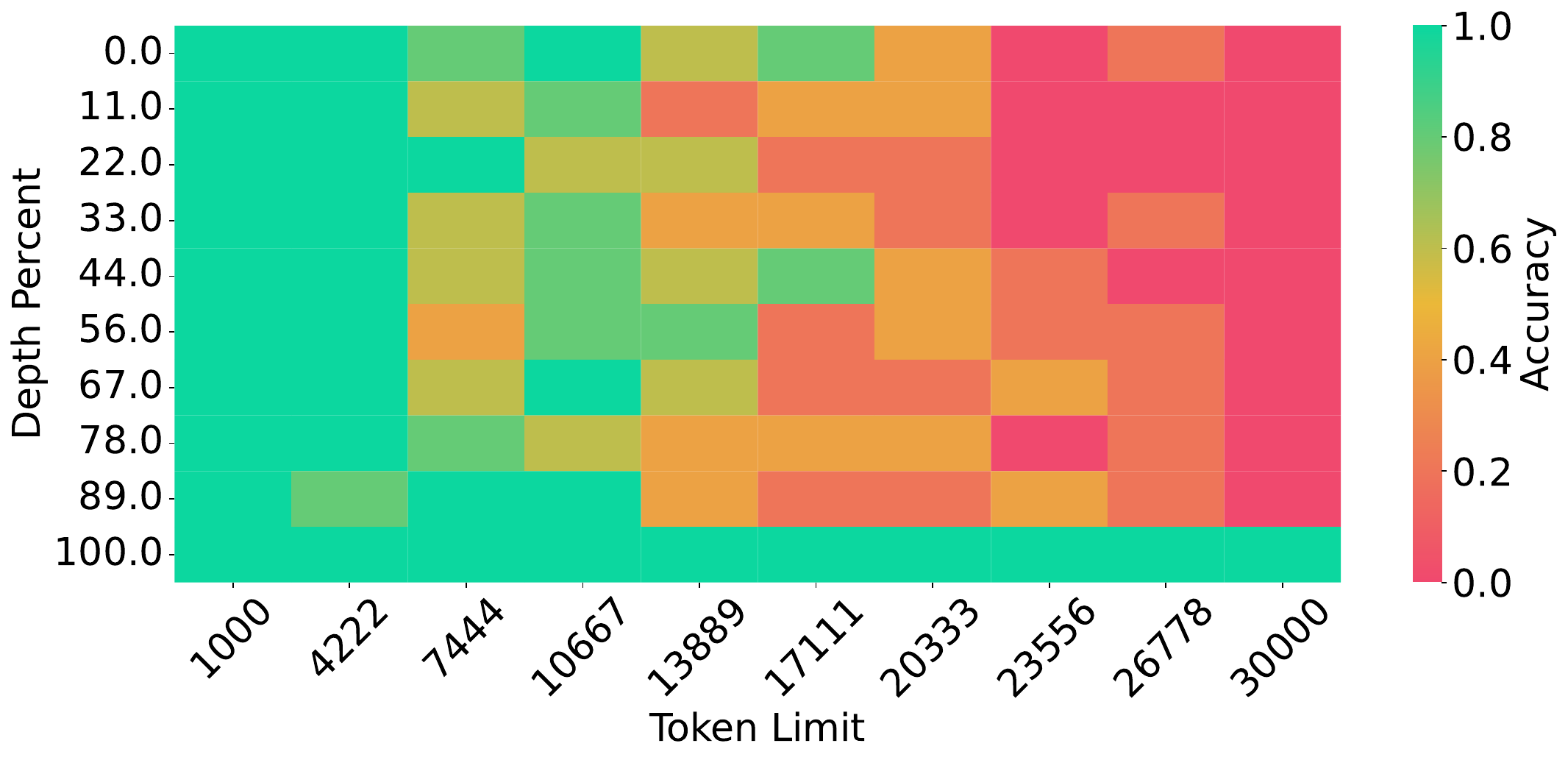}
         \caption{SnapKV-CAOTE}
    \end{subfigure}
    \\ 
     \begin{subfigure}[b]{0.33\linewidth}
         \centering
         \includegraphics[width=\linewidth]{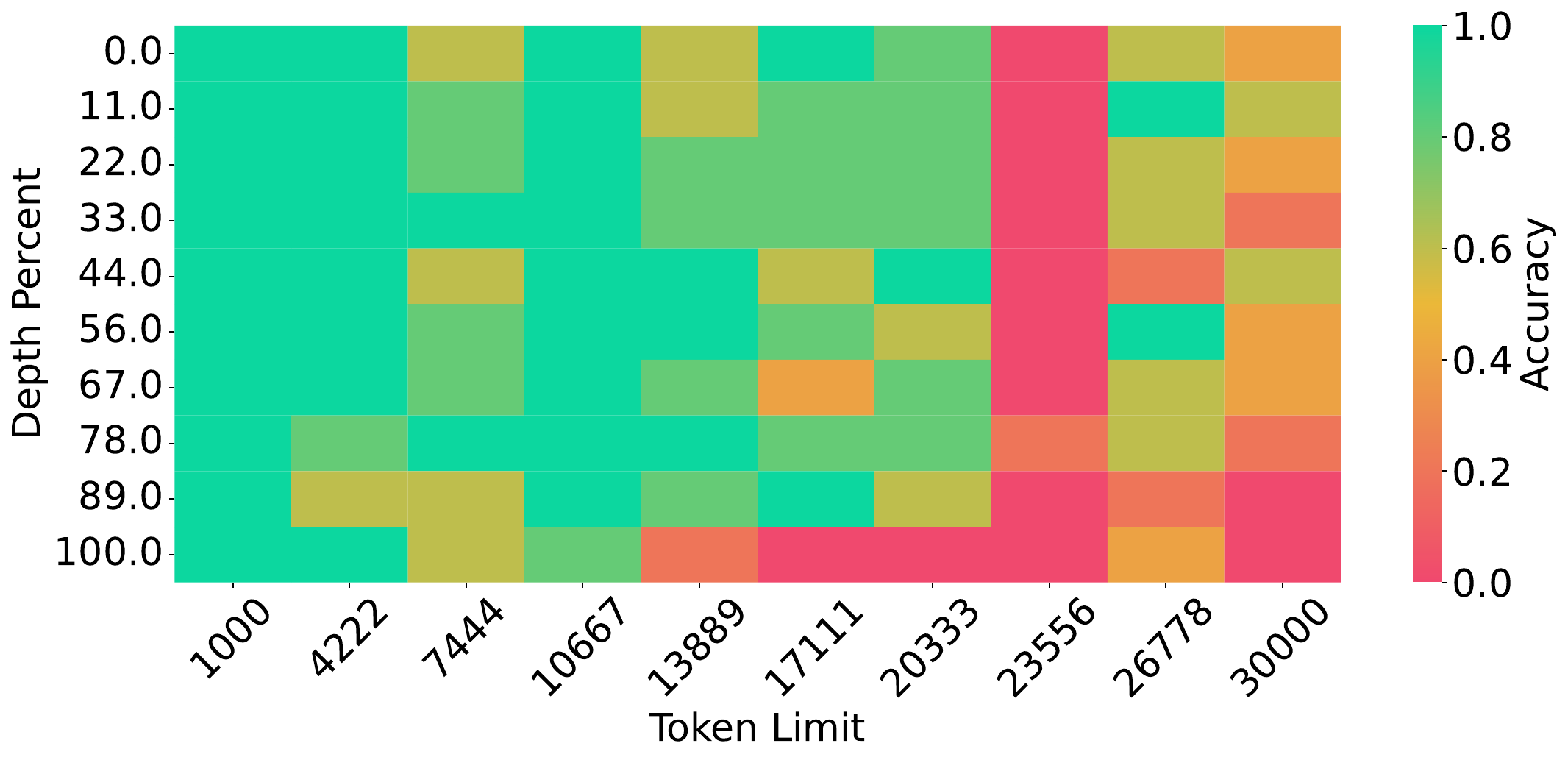}
         \caption{H2O-FastCAOTE}
     \end{subfigure}
     \hfill
     \begin{subfigure}[b]{0.33\linewidth}
         \centering
         \includegraphics[width=\linewidth]{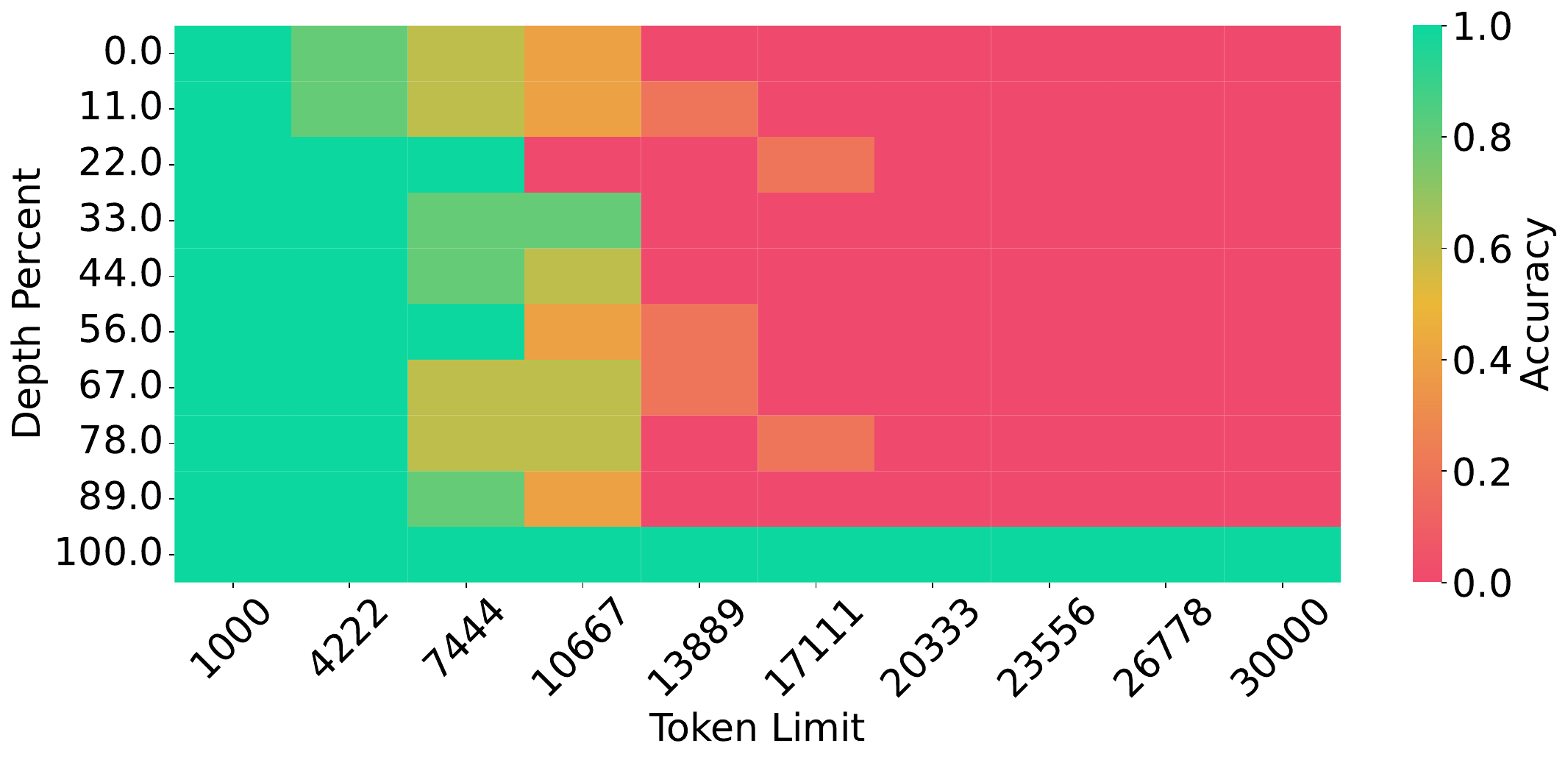}
         \caption{TOVA-FastCAOTE}
     \end{subfigure}
     \begin{subfigure}[b]{0.33\linewidth}
         \centering
         \includegraphics[width=\linewidth]{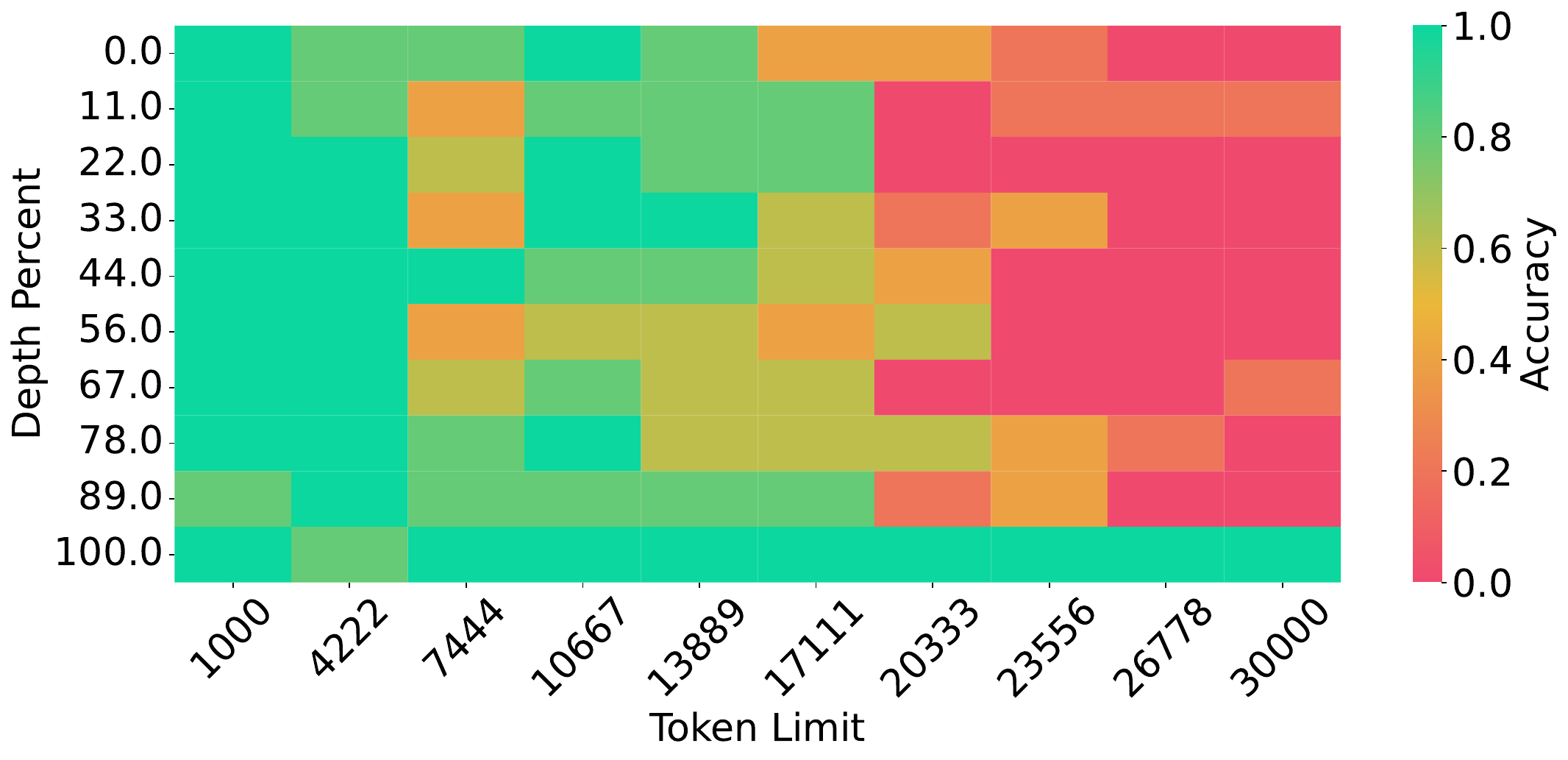}
         \caption{SnapKV-FastCAOTE}
    \end{subfigure}    
    \caption{Needle-In-A-Haystack accuracies of Llama 3.1-8B-Instruct with token eviction with $6$k cache budget.}
    \label{fig:needle_6k_llama8b}
\end{figure*}
\begin{table*}[t]
\caption{\textbf{Needle-in-haystack accuracy} for Llama 3.2-3B/3.1-8B-Instruct using baseline eviction methods with(out) \textit{CAOTE}. Higher is better, maximum accuracy is $1.0$.
}
\vspace{-2mm}
\label{table:needle_wide}
\begin{center}\resizebox{0.8\textwidth}{!}{
\begin{tabular}{cccccccccc}

\toprule
\multirow{2}{*}{Budget} & \multicolumn{3}{c}{H2O} &  \multicolumn{3}{c}{TOVA} &  \multicolumn{3}{c}{SnapKV} 
\\
\cmidrule(lr){2-4}
\cmidrule(lr){5-7}
\cmidrule(lr){8-10}
& & +CAOTE & +FastCAOTE & & +CAOTE & +FastCAOTE & & +CAOTE & +FastCAOTE
\\
\midrule
\multicolumn{10}{c}{Llama 3.1-8B-Instruct}
\\
\midrule
2k & 0.174 & \textbf{0.270} & 0.264  & 0.196 & 0.204 & 0.202  & 0.214 & 0.226 & 0.242
\\ 
4k & 0.330 & 0.538 & \textbf{0.568}  & 0.286 & 0.298 & 0.292 & 0.360 & 0.392 & 0.420
\\
6k & 0.544 & \textbf{0.698} & 0.676 & 0.370 & 0.402 & 0.396 & 0.490 & 0.550 & 0.580
\\
\midrule
\multicolumn{10}{c}{Llama 3.2-3B-Instruct}
\\
\midrule
2k & 0.104& 0.160 & \textbf{0.172} & \textbf{0.172} & 0.150 & 0.166 & 0.154 & \textbf{0.172} & 0.168
\\
4k & 0.198 & 0.262 & \textbf{0.294} & 0.220 & 0.232 & 0.232 & 0.226 & 0.222 & 0.232
\\
6k & 0.258 & 0.308 & \textbf{0.322} & 0.258 & 0.278 & 0.270 & 0.272 & 0.264 & 0.312
\\
8k & 0.324 & 0.414 & \textbf{0.404} & 0.338 & 0.364 & 0.344 & 0.342 & 0.336 & 0.366
\\
\bottomrule
\end{tabular}
}
\vspace{-5mm}
\end{center}
\end{table*}
Lastly, we run extensive experiments on Needle-In-A-Haystack benchmark \cite{liu2024lost,kamradt2023needle} and show quantitative results in \cref{table:needle_wide} and visualizations for $6$k budget for \texttt{Llama3.1-8B-Instruct} in \cref{fig:needle_6k_llama8b}. We observe in \cref{table:needle_wide} that \textit{H2O-FastCAOTE} performs best for all budgets with \texttt{Llama3.2-3B-Instruct}.
When using a budget $4$k with \texttt{Llama3.1-8B-Instruct}, \textit{CAOTE} boosted \textit{H2O} outperforms \textit{TOVA, SnapKV} as well. \textit{H2O-CAOTE} performs best for \texttt{Llama3.1-8B-Instruct} with budget = \{$2$k, $6$k\} and \textit{H2O-FastCAOTE} performs best for$4$k budget for \texttt{Llama3}.1-8b. The gains in precision are especially high for the $4$ k budget for the \texttt{Llama3.1-8B-Instruct}, with an increase of up to $30-60 \%$. 
We can see in \cref{fig:needle_6k_llama8b} that \textit{CAOTE} improves the state-of-the-art eviction method and is able to predict beyond their budget constraints. Results for \texttt{Qwen2.5} models are shown in \cref{table:needle_qwen_wide} in \cref{subsec:appendix_results_needle}.

\section{Related Work}
\label{section:related_work}

\paragraph{Sparse and Efficient Attention}
Sparse or efficient attention based methods result in mitigating the computation load and saving memory consumption by using efficient linear attentions  \cite{katharopoulos2020transformers}. Additionally, there are KV compression methods which don't evict any tokens as post eviction the token is not retrievable, \cite{dong2024get} proposes to keep important tokens based on attention score in cache while combining the evicted tokens via linear attention into single embedding. Landmark attention injects learnable special tokens between chunks of tokens and access past tokens in chunks instead of individually. Lastly, there are better architectures based with constant KV memory which outperform linear attention attentions \cite{mohtashami2023landmark}. However, all these methods require either from-scratch training or fine-tuning. 

\paragraph{KV Cache Eviction}
At the extreme end of efficient KV cache management, token eviction methods have been extensively studied. Leveraging the sparsity of attention in LLMs \cite{xiaoefficient, pmlr-v202-sheng23a, chen2021scatterbrain}, these methods determine the importance of KV pairs using (learned) rules and retain those with high scores to approximate the attention output.

StreamingLLM \cite{xiaoefficient} observes an attention sink phenomenon, where the first few tokens receive the majority of attention weights. It proposes SinkAttention, which prioritizes retaining initial tokens while applying sliding window-based attention. As seen in Appendix \cref{table:longbench_reduced_llama_68}, SinkAttention with sliding window-based budgeted key-value cache management performs poor than recent methods with \textit{CAOTE}. 
Other methods such as \textit{H2O} \cite{zhang2024h2o}, \textit{TOVA} \cite{oren2024transformers}, \textit{SnapKV} \cite{li2024snapkv}, and \textit{RoCO} \cite{ren2024efficacy} retain tokens with high attention scores using various algorithmic strategies — including preserving first/last tokens or applying smoothing to attention scores.

While these approaches primarily rely on attention scores to assess token importance, \textit{CAOTE} introduces an orthogonal scoring metric that estimates the impact of value vectors on approximating attention outputs. This value-centric perspective complements existing attention-based scoring methods and can be integrated with them to enhance eviction performance.

Recent works such as \textit{Quest} \citep{tang2024quest} propose using approximate attention scores for efficient cache management. \textit{CAOTE} can be combined with Quest by approximating value vectors (e.g., via min/max pooling) and computing approximate attention outputs to guide eviction decisions. Similarly, \textit{CaM} \cite{zhang2024cam} is based solely on attention scores; \textit{CAOTE} can enhance this by incorporating value vector information. Notably, both Quest and CaM report results only on \texttt{LLaMA2}, without comparisons to other eviction methods, limiting their relevance to current frontier models.

In contrast, we performed comparison with \textit{SnapKV}, developed around the same time, demonstrates state-of-the-art performance on modern models like \texttt{LLaMA3}, making it more representative of current deployment scenarios. \textit{CAOTE} complements SnapKV by introducing a value-centric scoring mechanism that can be integrated with attention-based heuristics to further improve eviction strategies.

Other lines of work focus on layer-wise budget optimization \cite{feng2024ada}, selecting top-K nodes across heads, or managing memory based on token characteristics \cite{ge2023model}, often using baseline eviction strategies like \textit{H2O}. \textit{CAOTE} is highly flexible and can be integrated with these approaches to achieve further performance gains.
Finally, \textit{DuoAttention}~\cite{xiao2024duoattention} explore orthogonal directions: addresses attention head selection. This method is complementary to \textit{CAOTE}, which focuses specifically on token-level eviction.

\textbf{In summary}, while recent methods like \textit{SnapKV}, \textit{Quest}, and \textit{CaM} explore efficient token eviction through attention-based heuristics, \textit{CAOTE} introduces a fundamentally new direction by leveraging value-centric scoring. Its modular design allows seamless integration with both attention-based and approximate methods, making it highly adaptable across model families — including frontier models like \texttt{LLaMA3} and \texttt{Qwen2.5}.


\section{Conclusion}
\label{sec:conclusion}
We propose a post-training KV cache eviction method that can be seamlessly integrated with any existing eviction strategies. Our approach, \textit{CAOTE}, introduces an optimization objective aimed at minimizing the alteration in attention output when evicting a token. This objective ensures the incorporation of both attention scores and value vectors in the eviction decision process. Our formulation allows for the parallel computation of the \textit{CAOTE} score for all tokens. Additionally, we present an efficient variant, \textit{FastCAOTE}. Through extensive evaluations across various downstream tasks, we demonstrate that eviction methods equipped with \textit{CAOTE} consistently deliver superior performance. 

\section*{Impact Statement}
This paper presents work whose goal is to advance the field of Machine Learning. There are many potential societal consequences of our work, none of which we feel must be specifically highlighted here.


\bibliography{reference}

\begin{thebibliography}{32}
\providecommand{\natexlab}[1]{#1}
\providecommand{\url}[1]{\texttt{#1}}
\expandafter\ifx\csname urlstyle\endcsname\relax
  \providecommand{\doi}[1]{doi: #1}\else
  \providecommand{\doi}{doi: \begingroup \urlstyle{rm}\Url}\fi

\bibitem[Agrawal et~al.(2023)Agrawal, Panwar, Mohan, Kwatra, Gulavani, and Ramjee]{agrawal2023sarathi}
Agrawal, A., Panwar, A., Mohan, J., Kwatra, N., Gulavani, B.~S., and Ramjee, R.
\newblock Sarathi: Efficient llm inference by piggybacking decodes with chunked prefills.
\newblock \emph{arXiv preprint arXiv:2308.16369}, 2023.

\bibitem[Bai et~al.(2024)Bai, Lv, Zhang, Lyu, Tang, Huang, Du, Liu, Zeng, Hou, Dong, Tang, and Li]{bai-etal-2024-longbench}
Bai, Y., Lv, X., Zhang, J., Lyu, H., Tang, J., Huang, Z., Du, Z., Liu, X., Zeng, A., Hou, L., Dong, Y., Tang, J., and Li, J.
\newblock {L}ong{B}ench: A bilingual, multitask benchmark for long context understanding.
\newblock In Ku, L.-W., Martins, A., and Srikumar, V. (eds.), \emph{Proceedings of the 62nd Annual Meeting of the Association for Computational Linguistics (Volume 1: Long Papers)}, pp.\  3119--3137, Bangkok, Thailand, August 2024. Association for Computational Linguistics.
\newblock \doi{10.18653/v1/2024.acl-long.172}.
\newblock URL \url{https://aclanthology.org/2024.acl-long.172}.

\bibitem[Chen et~al.(2021)Chen, Dao, Winsor, Song, Rudra, and R{\'e}]{chen2021scatterbrain}
Chen, B., Dao, T., Winsor, E., Song, Z., Rudra, A., and R{\'e}, C.
\newblock Scatterbrain: Unifying sparse and low-rank attention.
\newblock \emph{Advances in Neural Information Processing Systems}, 34:\penalty0 17413--17426, 2021.

\bibitem[Coenen et~al.(2021)Coenen, Davis, Ippolito, Reif, and Yuan]{coenen2021wordcraft}
Coenen, A., Davis, L., Ippolito, D., Reif, E., and Yuan, A.
\newblock Wordcraft: A human-ai collaborative editor for story writing.
\newblock \emph{arXiv preprint arXiv:2107.07430}, 2021.

\bibitem[Dong et~al.(2024)Dong, Yang, Zhang, Wang, Chi, and Chen]{dong2024get}
Dong, H., Yang, X., Zhang, Z., Wang, Z., Chi, Y., and Chen, B.
\newblock Get more with less: Synthesizing recurrence with kv cache compression for efficient llm inference.
\newblock \emph{arXiv preprint arXiv:2402.09398}, 2024.

\bibitem[Dubey et~al.(2024)Dubey, Jauhri, Pandey, Kadian, Al-Dahle, Letman, Mathur, Schelten, Yang, Fan, et~al.]{dubey2024llama}
Dubey, A., Jauhri, A., Pandey, A., Kadian, A., Al-Dahle, A., Letman, A., Mathur, A., Schelten, A., Yang, A., Fan, A., et~al.
\newblock The llama 3 herd of models.
\newblock \emph{arXiv preprint arXiv:2407.21783}, 2024.

\bibitem[Feng et~al.(2024)Feng, Lv, Cao, Xie, and Zhou]{feng2024ada}
Feng, Y., Lv, J., Cao, Y., Xie, X., and Zhou, S.~K.
\newblock Ada-kv: Optimizing kv cache eviction by adaptive budget allocation for efficient llm inference.
\newblock \emph{arXiv preprint arXiv:2407.11550}, 2024.

\bibitem[Ge et~al.(2023)Ge, Zhang, Liu, Zhang, Han, and Gao]{ge2023model}
Ge, S., Zhang, Y., Liu, L., Zhang, M., Han, J., and Gao, J.
\newblock Model tells you what to discard: Adaptive kv cache compression for llms.
\newblock \emph{arXiv preprint arXiv:2310.01801}, 2023.

\bibitem[Holmes et~al.(2024)Holmes, Tanaka, Wyatt, Awan, Rasley, Rajbhandari, Aminabadi, Qin, Bakhtiari, Kurilenko, et~al.]{holmes2024deepspeed}
Holmes, C., Tanaka, M., Wyatt, M., Awan, A.~A., Rasley, J., Rajbhandari, S., Aminabadi, R.~Y., Qin, H., Bakhtiari, A., Kurilenko, L., et~al.
\newblock Deepspeed-fastgen: High-throughput text generation for llms via mii and deepspeed-inference.
\newblock \emph{arXiv preprint arXiv:2401.08671}, 2024.

\bibitem[Kamradt(2023)]{kamradt2023needle}
Kamradt, G.
\newblock Needle in a haystack - pressure testing llms.
\newblock GitHub repository, 2023.
\newblock URL \url{https://github.com/gkamradt/LLMTest_NeedleInAHaystack}.

\bibitem[Katharopoulos et~al.(2020)Katharopoulos, Vyas, Pappas, and Fleuret]{katharopoulos2020transformers}
Katharopoulos, A., Vyas, A., Pappas, N., and Fleuret, F.
\newblock Transformers are rnns: Fast autoregressive transformers with linear attention.
\newblock In \emph{International conference on machine learning}, pp.\  5156--5165. PMLR, 2020.

\bibitem[Kry{\'s}ci{\'n}ski et~al.(2021)Kry{\'s}ci{\'n}ski, Rajani, Agarwal, Xiong, and Radev]{kryscinski2021booksum}
Kry{\'s}ci{\'n}ski, W., Rajani, N., Agarwal, D., Xiong, C., and Radev, D.
\newblock Booksum: A collection of datasets for long-form narrative summarization.
\newblock \emph{arXiv preprint arXiv:2105.08209}, 2021.

\bibitem[Li et~al.(2024)Li, Huang, Yang, Venkitesh, Locatelli, Ye, Cai, Lewis, and Chen]{li2024snapkv}
Li, Y., Huang, Y., Yang, B., Venkitesh, B., Locatelli, A., Ye, H., Cai, T., Lewis, P., and Chen, D.
\newblock Snapkv: Llm knows what you are looking for before generation.
\newblock \emph{arXiv preprint arXiv:2404.14469}, 2024.

\bibitem[Liao et~al.(2024)Liao, Wang, Li, Wang, Huang, and Jin]{liao2024doclayllm}
Liao, W., Wang, J., Li, H., Wang, C., Huang, J., and Jin, L.
\newblock Doclayllm: An efficient and effective multi-modal extension of large language models for text-rich document understanding.
\newblock \emph{arXiv preprint arXiv:2408.15045}, 2024.

\bibitem[Liu et~al.(2024)Liu, Lin, Hewitt, Paranjape, Bevilacqua, Petroni, and Liang]{liu2024lost}
Liu, N.~F., Lin, K., Hewitt, J., Paranjape, A., Bevilacqua, M., Petroni, F., and Liang, P.
\newblock Lost in the middle: How language models use long contexts.
\newblock \emph{Transactions of the Association for Computational Linguistics}, 12:\penalty0 157--173, 2024.

\bibitem[Mohtashami \& Jaggi(2023)Mohtashami and Jaggi]{mohtashami2023landmark}
Mohtashami, A. and Jaggi, M.
\newblock Landmark attention: Random-access infinite context length for transformers.
\newblock \emph{arXiv preprint arXiv:2305.16300}, 2023.

\bibitem[Oren et~al.(2024)Oren, Hassid, Adi, and Schwartz]{oren2024transformers}
Oren, M., Hassid, M., Adi, Y., and Schwartz, R.
\newblock Transformers are multi-state rnns.
\newblock \emph{arXiv preprint arXiv:2401.06104}, 2024.

\bibitem[Pope et~al.(2023)Pope, Douglas, Chowdhery, Devlin, Bradbury, Heek, Xiao, Agrawal, and Dean]{pope2023efficiently}
Pope, R., Douglas, S., Chowdhery, A., Devlin, J., Bradbury, J., Heek, J., Xiao, K., Agrawal, S., and Dean, J.
\newblock Efficiently scaling transformer inference.
\newblock \emph{Proceedings of Machine Learning and Systems}, 5:\penalty0 606--624, 2023.

\bibitem[Qin et~al.(2025)Qin, Cao, Lin, Hu, Fan, Cheng, Lin, and Li]{qin2025cake}
Qin, Z., Cao, Y., Lin, M., Hu, W., Fan, S., Cheng, K., Lin, W., and Li, J.
\newblock Cake: Cascading and adaptive kv cache eviction with layer preferences.
\newblock \emph{arXiv preprint arXiv:2503.12491}, 2025.

\bibitem[Ren \& Zhu(2024)Ren and Zhu]{ren2024efficacy}
Ren, S. and Zhu, K.~Q.
\newblock On the efficacy of eviction policy for key-value constrained generative language model inference.
\newblock \emph{arXiv preprint arXiv:2402.06262}, 2024.

\bibitem[Robinson et~al.(2022)Robinson, Rytting, and Wingate]{robinson2022leveraging}
Robinson, J., Rytting, C.~M., and Wingate, D.
\newblock Leveraging large language models for multiple choice question answering.
\newblock \emph{arXiv preprint arXiv:2210.12353}, 2022.

\bibitem[Sheng et~al.(2023)Sheng, Zheng, Yuan, Li, Ryabinin, Chen, Liang, Re, Stoica, and Zhang]{pmlr-v202-sheng23a}
Sheng, Y., Zheng, L., Yuan, B., Li, Z., Ryabinin, M., Chen, B., Liang, P., Re, C., Stoica, I., and Zhang, C.
\newblock {F}lex{G}en: High-throughput generative inference of large language models with a single {GPU}.
\newblock In Krause, A., Brunskill, E., Cho, K., Engelhardt, B., Sabato, S., and Scarlett, J. (eds.), \emph{Proceedings of the 40th International Conference on Machine Learning}, volume 202 of \emph{Proceedings of Machine Learning Research}, pp.\  31094--31116. PMLR, 23--29 Jul 2023.
\newblock URL \url{https://proceedings.mlr.press/v202/sheng23a.html}.

\bibitem[Tang et~al.(2024)Tang, Zhao, Zhu, Xiao, Kasikci, and Han]{tang2024quest}
Tang, J., Zhao, Y., Zhu, K., Xiao, G., Kasikci, B., and Han, S.
\newblock Quest: Query-aware sparsity for efficient long-context llm inference.
\newblock \emph{arXiv preprint arXiv:2406.10774}, 2024.

\bibitem[Thoppilan et~al.(2022)Thoppilan, De~Freitas, Hall, Shazeer, Kulshreshtha, Cheng, Jin, Bos, Baker, Du, et~al.]{thoppilan2022lamda}
Thoppilan, R., De~Freitas, D., Hall, J., Shazeer, N., Kulshreshtha, A., Cheng, H.-T., Jin, A., Bos, T., Baker, L., Du, Y., et~al.
\newblock Lamda: Language models for dialog applications.
\newblock \emph{arXiv preprint arXiv:2201.08239}, 2022.

\bibitem[Xiao et~al.(2024{\natexlab{a}})Xiao, Tang, Zuo, Guo, Yang, Tang, Fu, and Han]{xiao2024duoattention}
Xiao, G., Tang, J., Zuo, J., Guo, J., Yang, S., Tang, H., Fu, Y., and Han, S.
\newblock Duoattention: Efficient long-context llm inference with retrieval and streaming heads.
\newblock \emph{arXiv preprint arXiv:2410.10819}, 2024{\natexlab{a}}.

\bibitem[Xiao et~al.(2024{\natexlab{b}})Xiao, Tian, Chen, Han, and Lewis]{xiaoefficient}
Xiao, G., Tian, Y., Chen, B., Han, S., and Lewis, M.
\newblock Efficient streaming language models with attention sinks.
\newblock In \emph{The Twelfth International Conference on Learning Representations}, 2024{\natexlab{b}}.

\bibitem[Xiao et~al.(2023)Xiao, Wu, Guo, Li, Zhang, Qin, and Liu]{xiao2023survey}
Xiao, Y., Wu, L., Guo, J., Li, J., Zhang, M., Qin, T., and Liu, T.-y.
\newblock A survey on non-autoregressive generation for neural machine translation and beyond.
\newblock \emph{IEEE Transactions on Pattern Analysis and Machine Intelligence}, 45\penalty0 (10):\penalty0 11407--11427, 2023.

\bibitem[Yang et~al.(2024)Yang, Yang, Zhang, Hui, Zheng, Yu, Li, Liu, Huang, Wei, et~al.]{yang2024qwen2}
Yang, A., Yang, B., Zhang, B., Hui, B., Zheng, B., Yu, B., Li, C., Liu, D., Huang, F., Wei, H., et~al.
\newblock Qwen2. 5 technical report.
\newblock \emph{arXiv preprint arXiv:2412.15115}, 2024.

\bibitem[Zhang et~al.(2024{\natexlab{a}})Zhang, Ladhak, Durmus, Liang, McKeown, and Hashimoto]{zhang2024benchmarking}
Zhang, T., Ladhak, F., Durmus, E., Liang, P., McKeown, K., and Hashimoto, T.~B.
\newblock Benchmarking large language models for news summarization.
\newblock \emph{Transactions of the Association for Computational Linguistics}, 12:\penalty0 39--57, 2024{\natexlab{a}}.

\bibitem[Zhang et~al.(2023)Zhang, Lv, and Yang]{zhang2023adaptive}
Zhang, X., Lv, Z., and Yang, Q.
\newblock Adaptive attention for sparse-based long-sequence transformer.
\newblock In \emph{Findings of the Association for Computational Linguistics: ACL 2023}, pp.\  8602--8610, 2023.

\bibitem[Zhang et~al.(2024{\natexlab{b}})Zhang, Du, Luo, Zhong, Zhang, Liu, and Ji]{zhang2024cam}
Zhang, Y., Du, Y., Luo, G., Zhong, Y., Zhang, Z., Liu, S., and Ji, R.
\newblock Cam: Cache merging for memory-efficient llms inference.
\newblock In \emph{Forty-first international conference on machine learning}, 2024{\natexlab{b}}.

\bibitem[Zhang et~al.(2024{\natexlab{c}})Zhang, Sheng, Zhou, Chen, Zheng, Cai, Song, Tian, R{\'e}, Barrett, et~al.]{zhang2024h2o}
Zhang, Z., Sheng, Y., Zhou, T., Chen, T., Zheng, L., Cai, R., Song, Z., Tian, Y., R{\'e}, C., Barrett, C., et~al.
\newblock H2o: Heavy-hitter oracle for efficient generative inference of large language models.
\newblock \emph{Advances in Neural Information Processing Systems}, 36, 2024{\natexlab{c}}.

\end{thebibliography}
\bibliographystyle{icml2025}

\appendix

\section{Limitation}
\textit{CAOTE} is a myopic (greedy) strategy and its scoring framework based on the assumption of evicting $1$ token per iteration. This assumption breaks during prefilling stage, however, taking into account change in attention output due to multi-token eviction is non-trivial. Fortunately, we observe that even assuming multi-token eviction independently (without considering the effect of other tokens being evicted), \textit{CAOTE} is still able to give boost in performance for all tasks. Due to this reason \textit{CAOTE}'s performance might further improve with smaller prompt filling block-size.

\section{Extending \textit{CAOTE} to Multi-token Eviction}
\label{appendix:multi_token_caote}
We start with simple example, without loss of generality consider evicting two tokens in the order $i=1, j=2$. From \eqref{eq:caote_score_based_on_tova}
single token eviction error for token $1$ is
\begin{equation}
    c_{1} = \frac{\alpha_{1}}{1-\alpha_{1}}||X_{\text{attn}} - v_{1}||_{2}
\end{equation}
After evicting token $1$, the updated attention output becomes:
\begin{equation}
    X_{\text{attn},1}^{'}=\Sigma_{i=2}^{n+1}\alpha_{i}^{'}v_{i}, \, \text{where } \alpha_{i}^{'} = \frac{\alpha_{i}}{1-\alpha_{1}}
    \label{eq:x_attn_1_dash}
\end{equation}
Now, evicting token $2$ from updated attention output above yields
\begin{align}
    & X_{\text{attn},[1,2]}^{''} = \Sigma_{i=3}^{n+1}\alpha_{i}^{''}v_{i}, 
    \\
    & \text{where } \alpha_{i}^{''}=\frac{\alpha_{i}^{'}}{1-\alpha_{2}^{'}}=\frac{\alpha_{i}}{1-\alpha_{1}-\alpha_{2}}
\end{align}
This leads to the following insights:
\begin{itemize}
    \item The updated attention output can be expressed as 
    \begin{equation}
        X_{\text{attn},[1,2]}^{''} = \frac{1}{1-\alpha_{2}^{'}}(X_{\text{attn},1}^{'} - \alpha_{2}^{'}v_{2})
    \end{equation}
    \item Eviction error of removing token $2$ after removing token $1$ is
    \begin{equation}
        c_{[1,2]} = ||X_{\text{attn},1}^{'}-X_{\text{attn},[1,2]}^{''}||_{2}
    \end{equation}
    \item Substituting the expressions above, we obtain a \textbf{closed-form joint eviction score}:
    \begin{equation}
        c_{[1,2]} = \frac{1}{1-\alpha_{1}-\alpha_{2}}||\alpha_{1}(X_{\text{attn}}-v_{1}) + \alpha_{2}(X_{\text{attn}}-v_{2})||_{2}
    \end{equation}
\end{itemize}

This formulation generalizes to evicting $m$ tokens jointly (assuming first $m$ tokens here without loss of generality)
\begin{equation}
    c_{[1,\dots, m]} = \frac{1}{1-\Sigma_{i=1}^{m}\alpha_{i}}||\Sigma_{i=1}^{m}\alpha_{i}(X_{\text{attn}}-v_{i})||_{2}
\end{equation}
This expression highlights the combinatorial nature of multi-token eviction: for $n$ tokens with $m$-token eviction, there are $\binom{n}{m}$ possible combinations. This makes exact computation intractable for large $m$, motivating the need for approximate or greedy strategies.

Importantly, existing token eviction methods typically rely only on attention scores and assume independence between tokens. As a result, the relative ranking of tokens remains unchanged even after evictions, which limits their effectiveness in multi-token settings.

In contrast, CAOTE explicitly models the interplay between attention scores and value vectors, capturing how evicting one token affects the contribution of others. This introduces interdependencies between tokens during eviction, which are critical for accurate multi-token decisions.

\section{Inference Latency}
\label{appendix:inference_latency}
We analyze the computational overhead of \textit{CAOTE} and \textit{FastCAOTE} during both prefill and generation phases. The overhead is minimal, especially for \textit{FastCAOTE}.

Let $s$ be sequence length, $d$ the hidden dimension, $d_{KV}$ the KV hidden dimension, $d_{FFN}$ the intermediate dimension, $d_{V}$ the vocabulary size, and $L$ the number of hidden layers. The total floating-point Operation count are as follows:
\begin{align}
    &\text{prefill flops } : 2Lsd(d+d_{KV}+d_{FFN}+2s+6) + dd_{V}
    \\
    &\text{generation flops }: 2Ld(d+d_{KV}+d_{FFN}+2s+6) + dd_{V}
    \\
    &\text{\textit{CAOTE} flops}: Ls(7d + 3)
    \\
    &\text{\textit{FastCAOTE} flops}: Ls(4d + 3) + L
\end{align}
We compute the relative overhead for \texttt{Llama3}.1-8B model with different sequence lengths and show the ratio between prefill (and generation) flops to \textit{CAOTE} flops in \cref{tab:prefill_inference_latency}, \ref{tab:generation_inference_latency} 
\begin{table}[]
    \centering
    \begin{tabular}{c|cc}
        Sequence Length & \multicolumn{2}{c}{Ratio}
        \\
        & with \textit{CAOTE} & with \textit{FastCAOTE}
        \\
        \midrule
        4k & 1.6e-4 & 8.9e-5
        \\
        8k & 9.25e-5 & 5.28e-5
        \\
        32k & 6.45e-5 & 3.69e-5
        \\
        \midrule
    \end{tabular}
    \caption{Ratio between prefill flops and \textit{(Fast)CAOTE} flops. \textit{CAOTE} adds minimal overhead during prefill phase.}
    \label{tab:prefill_inference_latency}
\end{table}

\begin{table}[]
    \centering
    \begin{tabular}{c|cc}
        Sequence Length & \multicolumn{2}{c}{Ratio}
        \\
        & with \textit{CAOTE} & with \textit{FastCAOTE}
        \\
        \midrule
        512 & 0.08 & 0.046
        \\
        1024 & 0.15 & 0.087
        \\
        \midrule
    \end{tabular}
    \caption{Ratio between generation flops and \textit{(Fast)CAOTE} flops. \textit{CAOTE} adds minimal overhead during prefill phase, especially \textit{FastCAOTE}}
    \label{tab:generation_inference_latency}
\end{table}

\section{Additional Proofs}
\label{sec:appendix_proofs}
\subsection{H2O scores}
\label{subsec:h2o_scores}
\begin{theorem}
    Given H2O scores for $b+1$ tokens as $[h_{1}, \dots, h_{b+1}]$, the summation of all $h_{i}$, $\forall i\in\{1, \dots, b+1\}$ is greater than $1$ 
    \begin{equation}
     \Sigma_{i=1}^{b+1}h_{i} > 1   
    \label{thm:h2o_scores}
    \end{equation}
\end{theorem}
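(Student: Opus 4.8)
The claim is that the H2O scores $h_1,\dots,h_{b+1}$ for the $b+1$ tokens currently in cache sum to strictly more than $1$. The key fact to exploit is the definition of the H2O score: $h_j$ is the \emph{cumulative} sum of attention weights that token $j$ has received over all query positions in which it participated. The plan is to expand this double sum and compare it against the single row-normalization constraint of softmax, which gives exactly $1$ per query.

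First I would set up notation carefully. Suppose the last input (query) token has index $n$, and for each query position $t \le n$ let $A_{t,i}$ denote the attention weight from query $t$ to key $i$ (so $\sum_{i} A_{t,i} = 1$ for each fixed $t$, where the sum runs over all keys visible to query $t$). By definition, the H2O score of a surviving token $j$ is $h_j = \sum_{t \,:\, t \ge j,\ t \le n} A_{t,j}$, i.e. the accumulation of the attention mass token $j$ has absorbed across its lifetime. Then
\begin{align}
    \sum_{j=1}^{b+1} h_j &= \sum_{j=1}^{b+1} \sum_{t \ge j} A_{t,j}.
\end{align}

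Second, the core argument: for the \emph{final} query position $t = n$, the softmax over the $b+1$ currently cached keys satisfies $\sum_{j=1}^{b+1} A_{n,j} = 1$ exactly. This already accounts for a total contribution of $1$ to the double sum above (taking the $t=n$ slice). Every other term $A_{t,j}$ with $t < n$ is strictly positive, since softmax weights are always $> 0$, and there is at least one such term (e.g. any token that was present for at least two query steps, which must exist once $b \ge 1$ and at least one prior decoding/prefill step has occurred). Hence $\sum_{j} h_j \ge 1 + (\text{strictly positive remainder}) > 1$. I would write this as an inequality chain, being careful to justify that the ``remainder'' double sum is nonempty and positive rather than merely nonnegative.

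The main obstacle is bookkeeping rather than mathematical depth: I must make sure the index ranges in the double sum are stated consistently with how H2O actually accumulates scores (accumulating only over queries for which the token is still in cache, and handling the subtlety that some previously-seen queries may themselves have been evicted), and I must pin down the edge case — the statement needs at least one ``extra'' query step beyond the single final one for the strict inequality to hold, so I would note that in the generation phase this is automatically satisfied since the cache is full (has reached budget $b$, meaning many prior steps occurred). Once the ranges are fixed, the positivity of softmax entries closes the argument immediately.
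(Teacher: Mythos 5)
Your proof is correct, but it takes a genuinely different route from the paper's. The paper works in a simplified setting where it assumes all $b+1$ tokens are present and propagated through the model simultaneously under the causal mask; it then swaps the order of the double sum $\sum_{j}\sum_{i} A_{i,j}$ and uses the softmax row-normalization of \emph{every} query row to compute the total exactly: $\sum_{j=1}^{b+1} h_j = b+1 > 1$. You instead prove only a lower bound: you isolate the final query row, whose softmax over the $b+1$ cached keys contributes exactly $1$, and then invoke strict positivity of softmax entries to argue the remaining terms (attention mass accumulated from earlier queries onto still-cached tokens) add a strictly positive amount. What your approach buys is robustness to the realistic accumulation dynamics: it does not need the paper's simultaneous-processing assumption, and it tolerates the fact that earlier query rows partly distributed mass onto tokens that have since been evicted (so their retained contribution is $<1$ per row but still $>0$), which is exactly the bookkeeping you flag. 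What the paper's approach buys is a sharper statement (the exact value $b+1$ rather than just $>1$) at the cost of an idealized setup. One point you should nail down if you write this up fully: the strictness of your inequality needs at least one cached token visible to at least one earlier query, which you correctly note is guaranteed once $b\ge 1$ and a prior prefill or decoding step has occurred; stating that hypothesis explicitly closes the only gap in your sketch.
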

\begin{proof}
Assuming that only $b+1$ tokens are present and are propagated through the model at the same time. The causal attention mask $A \in [0,1]^{b+1 \times b+1}$, will have all entries on upper triangle excluding diagonal is $0$. The first token will attend to itself have attention score as $1$
\begin{align}
    A_{1,1} =& 1
    \\
    A_{1,>1} =& 0
\end{align}

H2O score for token $1$ is defined as the sum of attention score to token $1$ by all future tokens:
\begin{equation}
    h_{1} = A_{1,1} + A_{2,1} + \dots + A_{b+1,1} = \Sigma_{i=1}^{b+1}A_{i,1}
    \label{eq:h2o_1}
\end{equation}    
In general for a token $j$, the H2O score is defined as
\begin{align}
    h_{j} =& A_{j,j} + A_{j+1,j} + \dots + A_{b+1,j} 
    \\
    = & \underbrace{A_{1,j} + \dots + A_{j-1,j}}_{=0 \text{ causal mask}} + A_{j,j} \dots + A_{b+1,j}
    \\
    = & \Sigma_{i=1}^{b+1}A_{i,j}
    \label{eq:h2o_gen}
\end{align}

Using Eq. \eqref{eq:h2o_gen} and summing for all H2O scores, we get
\begin{align}
    \Sigma_{j=1}^{b+1}h_{j} =& \Sigma_{j=1}^{b+1}\Sigma_{i=1}^{b+1}A_{i,j}
    \\
    =& \Sigma_{i=1}^{b+1}\Sigma_{j=1}^{b+1}A_{i,j}
    \\
    =& \Sigma_{i=1}^{b+1}(\underbrace{\Sigma_{j=1}^{i}A_{i,j}}_{=1 \text{ due to softmax}} + \underbrace{\Sigma_{j=j+1}^{b+1}A_{i,j}}_{=0 \text{ for causal mask}})
    \\
    = & b+1 > 1
\end{align}
Hence proved.

\end{proof}

\subsection{Relation of CAOTE score to output logits}
\label{subsec:appendix_logits}
We show that evicting token based on \textit{CAOTE} score can lead to smaller discrepancy in final logits which affect the downstream performance. As \textit{CAOTE} score is the eviction error during generation phase, we instead show the relation between eviction error and logits. We start by showing for a single attention layer (single head) based network, its extension to multiple heads and, finally a transformer layer (self-attention and feed-forward-network). For simplicity we ignore layer-norms. Some definitions which will used are given below. 
We assume a budget of $b$, with current token sequence having $b+1$ tokens (superscript denotes layer):
\begin{align}
    X^{0} \triangleq [x_{1},x_{2},\dots,x_{b+1}], \, X^{0}_{b+1}\triangleq x_{b+1} 
\end{align}
The attention output for a sequence of $b+1$ tokens is (for layer $m$)
\begin{align}
    X_{A,b+1}^{m} \triangleq \Sigma_{j=1}^{b+1}\alpha_{j}^{m}W_{v}^{m}X_{j}^{m}
\end{align}
The logits with and without eviction for token $j$ are defined as $l_{j}, \hat{l}_{j}$ respectively.

\textbf{(Case 1) Single self-attention layer with single head}: The logits for dense baseline is:
\begin{align}
    X^{1}_{b+1} =& X^{0}_{b+1} + W_{O}X_{A,b+1}^{0}
    \\
    l_{b+1} =& W_{H}X^{1}_{b+1} = W_{H}(X^{0}_{b+1} + W_{O}\Sigma_{j}\alpha_{j}W_{V}X^{0}_{j})
\end{align}
where $W_{H}, W_{O}, W_{V}$ are the LM-head, output projection, and value projection respectively. $X^{1}_{b+1}$ is the output after the residual connection. 

The logits with eviction will have a perturbation due to error in attention output (CAOTE score or eviction error), and is given as:
\begin{align}
    \hat{X}^{1}_{b+1} =& X^{0}_{b+1} + W_{O}X_{A,b+1}^{0} + \textcolor{red}{W_{O}\Delta_{A}^{0}}
    \\
    \hat{l}_{b+1} =& W_{H}\hat{X}_{b+1}^{1}
    \\
                =& W_{H}(X^{0}_{b+1} + W_{O}X_{A,b+1}+\textcolor{red}{W_{O}\Delta_{A}^{0}}) 
    \\
                =& l_{b+1} + \textcolor{red}{\Delta_{l,b+1}}
\end{align}
where the logit error is $\Delta_{l,b+1} = W_{H}W_{O}\Delta_{A,b+1}$, $\Delta_{A,b+1} = e_{\text{eviction}}$ from \eqref{eq:eviction_error}. 

\textbf{(Case 2) Multiple attention heads}: this is trivial and can be achieved by replacing $\Delta_{A} = \text{concat}(\Delta_{A}^{1}, \dots, \Delta_{A}^{h})$, where super-script denotes head number. 

\textbf{(Case 3) Single self-attention and feedforward-network (FFN)}: we still assume single head without layer-norms. The dense logit is given as
\begin{align}
    X^{1/2}_{b+1} =& X^{0}_{b+1} + W_{O}X^{0}_{A,b+1}
    \\
    X^{1}_{b+1} =& X^{1/2}_{b+1} + W_{FFN}X^{1/2}_{b+1}
    \\
                =& X^{0}_{b+1} + W_{O}X^{0}_{A,b+1} \nonumber \\ 
                & + W_{FFN}X^{0}_{b+1} + W_{FFN}W_{O}X^{0}_{A,b+1}
    \\
    l_{b+1} =& W_{H}X_{b+1}^{1}
\end{align}
where, for simplicity we assume feedforward network to subsumed within $W_{FFN}$. 

The perturbed logit due to eviction is given as:
\begin{align}
    \hat{X}^{1/2}_{b+1} =& X^{0}_{b+1} + W_{O}X_{A,b+1}^{0} + \textcolor{red}{W_{O}\Delta_{A}^{0}}
    \\
    \hat{X}^{1}_{b+1} =& \hat{X}^{1/2}_{b+1} + W_{FFN}\hat{X}^{1/2}_{b+1}
    \\
                    =& X^{0}_{b+1} + W_{O}X_{A,b+1}^{0} + \textcolor{red}{W_{O}\Delta_{A}^{0}} \nonumber \\
                    & + W_{FFN}X^{0}_{b+1} + W_{FFN}W_{O}X_{A,b+1}^{0} \nonumber \\
                    & + \textcolor{red}{W_{FFN}W_{O}\Delta_{A}^{0}}
    \\
    \hat{l}_{b+1} =& W_{H}\hat{X}_{b+1}^{1} 
    \\
                    =& l_{b+1} + \textcolor{red}{\Delta_{l,b+1}}
\end{align}
where, the logit error $\Delta_{l,b+1} = W_{H}(W_{O}\Delta_{A}^{0} + W_{FFN}W_{O}\Delta_{A}^{0})$. Thus, the above analysis shows that error in attention output can cause discrepancy in logit space which can affect performance on downstream tasks.

Note that for multiple layers, each layer would have its own eviction error which will keep compounding; however, this computing the exact compounded error is non-trivial due to the presence of output layer-norms.

\section{Relation between \textit{CAOTE} and \textit{FastCAOTE}}
\begin{table*}[]
    \centering
    \resizebox{\textwidth}{!}{%
    \begin{tabular}{|*{15}{c|}}
        \hline
        Layer & 1 & 2 & 3 & 4 & 5 & 6 & 7 & 8 & 9 & 10 & 11 & 12 & 13 & 14 \\
        \hline
        & 0.99 & 0.98 & 0.91 & 0.97 & 0.92 & 0.88 & 0.85 & 0.82 & 0.81 & 0.90 & 0.84 & 0.92 & 0.83 & 0.86 \\
        \hline
    \end{tabular}
    }
    
    \vspace{0.5cm}
    
    \resizebox{\textwidth}{!}{%
    \begin{tabular}{|*{15}{c|}}
        \hline
        layer & 15 & 16 & 17 & 18 & 19 & 20 & 21 & 22 & 23 & 24 & 25 & 26 & 27 & 28 \\
        \hline
        & 0.91 & 0.88 & 0.94 & 0.96 & 0.92 & 0.95 & 0.99 & 0.99 & 0.96 & 0.98 & 0.98 & 0.88 & 0.99 & 0.96 \\
        \hline
    \end{tabular}
    }
    \caption{Spearman's correlation between \textit{CAOTE} and \textit{FastCAOTE}  on \textbf{Llama3.2-3B-Instruct} using sample from NarrativeQA}
    \label{tab:correlation_caote_fast_caote}
\end{table*}
We emperically show in \cref{tab:correlation_caote_fast_caote} that \textit{CAOTE} and \textit{FastCAOTE} are correlated, therefore, showing the efficacy of \textit{FastCAOTE}

\section{Additional Results}
\label{sec:appendix_results}

\subsection{Task context and generation lengths}
\label{subsec:task_gen_len}
\begin{table}[htbp]
    \centering
    \caption{Task Input Sizes}
    \begin{tabular}{lc}
        \hline
        \textbf{Task Type} & \textbf{Input Size (tokens)} \\
        \hline
        QA tasks & 4k -- 16k \\
        Summarization & 2.1k -- 16k \\
        Few-shot learning & 5.2k -- 22.4k \\
        Synthetic tasks & 7k -- 11k \\
        Code & 1.5k -- 4.2k \\
        Needle-in-the-haystack & up to 32k \\
        \hline
    \end{tabular}
    \label{tab:task_sizes}
\end{table}

\begin{table}[htbp]
    \centering
    \caption{Generation Lengths}
    \begin{tabular}{lc}
        \hline
        \textbf{Task Type} & \textbf{Generation Length (tokens)} \\
        \hline
        QA tasks & 100 -- 300 \\
        Summarization & 300 -- 800 \\
        Multi-document reasoning & 500 -- 1000 \\
        Code & $\sim$500 \\
        \hline
    \end{tabular}
    \label{tab:generation_lengths}
\end{table}
We show the average context and generation lengths of different tasks in \cref{tab:task_sizes}, \ref{tab:generation_lengths} respectively. 

\subsection{LongBench}
\label{subsec:appendix_results_longbench}
\begin{table*}[!t]
\caption{\textbf{LongBench results for Llama 3.1-8B and Llama 3.2-3B-Instruct. }
Higher number is better. 
We highlight the best performing methods within a given budget with \textbf{bold} and the second best with underline. 
} 
\label{table:longbench_reduced_llama_68}
\vspace{-5mm}
\begin{center}
\resizebox{\textwidth}{!}{
\begin{tabular}{clccccccccccccccccc}
\toprule

&
&
\multicolumn{3}{c}{Single Doc. QA} &
\multicolumn{3}{c}{Multi Doc. QA} &
\multicolumn{3}{c}{Summarization} &
\multicolumn{3}{c}{Few$\-$shot Learning} &
\multicolumn{2}{c}{Synthetic} &
\multicolumn{2}{c}{Code} &
\\

\cmidrule(lr){3-5}
\cmidrule(lr){6-8}
\cmidrule(lr){9-11}
\cmidrule(lr){12-14}
\cmidrule(lr){15-16}
\cmidrule(lr){17-18}

&
& 
{Narrative QA} & {Qasper} & {MF-en} & 
{HotpotQA} & {2WikiMQA} & {Musique} & 
{GovReport} & {QMSum} & {MultiNews} & 
{TREC} & {TriviaQA} & {SAMSum} & 
{PCount} & {PR-en} &
{Lcc} & {RB-P} & 
{Avg.} 
\\

\midrule

\multicolumn{2}{c}{Llama 3.1-8B} &
30.05
& 47.00 & 56.12 & 57.33 & 47.81 & 32.25 & 34.86 & 25.32 & 27.02 & 73.00 & 91.61 & 43.37 & 8.33 & 99.50 & 61.66 & 51.94 & 49.20 \\
\midrule

\multirow{11}{*}{6k} & Sink & 25.41 & 47.40 & 44.13 & 47.39 & 45.73 & 21.90 & 32.53 & 22.19 & 26.87 & 72.00 & 91.25 & 43.41 & 3.08 & 52.50 & 62.22 & 56.24 & 43.39
\\
\cmidrule(lr){2-19}

 &
H2O & 
8.52 & 43.31 & 44.80 & 40.03 & 42.46 & 21.68 & 11.85 & 8.78 & 26.03 & 62.00 & 56.39 & 25.72 & 5.75 & 45.50 & 58.62 & 29.53 & 33.19 \\
\cmidrule(lr){2-19}

& 
\multicolumn{1}{r}{+ CAOTE} & 
25.77 & 46.45 & 54.99 & 50.57 & 47.7 & 31.93 & 33.99 & 22.86 & 27.01 &  71 & 87.05 & 41.29 & 6.67 & 54 & 60.48 & 43.23 & 44.06
\\

& 
\multicolumn{1}{r}{+ FastCAOTE} & 
27.02 & 46.46 & 55.4 & 51.32 & 47.4 & 32.89 & 34.08 & 23.69 & 27.03 & 71 & 86.25 & 42.14 & 9 & 48.5 & 60.49 & 42.94 & 44.10 
\\

\cmidrule[0.8pt](lr){2-19}

&
TOVA & 
24.59 & 45.93 & 53.92 & 55.09 & 47.43 & 25.07 & 32.33 & 24.10 & 27.00 & 68.50 & 90.81 & 43.89 & 4.25 & 67.00 & 61.50 & 52.39 & 45.24 \\
\cmidrule(lr){2-19}

& 
\multicolumn{1}{r}{+ CAOTE} &
24.23 & 45.88 & 53.5 & 52.96 & 49.59 & 27.02 & 32.62 & 23.86 & 27.08 & 70 & 90.98 & 43.45 & 3 & 74.5 & 61.46 & 51.76 & 45.74 \\

& 
\multicolumn{1}{r}{+ FastCAOTE}& 
24.17 & 46.07 & 53.8 & 53.53 & 48.11 & 26.49 & 32.64 & 23.88 & 27.01 &  70 & 90.81 & 43.53 & 3 & 73  & 61.49 & 51.43 & 45.56 \\

\cmidrule[0.8pt](lr){2-19}

&
SnapKV & 
24.10 & 45.57 & 50.44 & 53.12 & 48.41 & 24.27 & 33.43 & 23.53 & 27.03 & 71.50 & 92.28 & 43.58 & 5.25 & 98.00 & 61.32 & 52.16 & 47.12 \\
\cmidrule(lr){2-19}

& 
\multicolumn{1}{r}{+ CAOTE} &
25.97 & 46.09 & 51.54 & 55.19 & 47.41 & 26.48 & 33.32 & 24 & 27.05 & 71.5 & 91.11 & 43.55 & 6.83 & 99.5 & 61.11 & 51.45 & \underline{47.63} \\

& 
\multicolumn{1}{r}{+ FastCAOTE} & 
24.77 & 46.06 & 52.18 & 56.72 & 47.01 & 26.24 & 33.41 & 23.8 & 26.99 & 73 & 91.31 & 43.6 & 5.92 & 99.5 & 61.5 & 51.37 & \textbf{47.71} \\

\midrule
\multirow{11}{*}{8k} & Sink & 23.53 & 46.63 & 48.68 & 49.61 & 47.16 & 21.14 & 33.10 & 23.20 & 26.92 & 72.00 & 91.29 & 43.79 & 3.25 & 66.00 & 62.18 & 56.43 & 44.68
\\
\cmidrule(lr){2-19}
 &
H2O & 
13.85 & 44.94 & 47.81 & 43.64 & 44.90 & 23.65 & 18.78 & 11.35 & 26.49 & 69.50 & 69.05 & 33.41 & 5.25 & 62.50 & 59.74 & 36.26 & 38.20
\\
\cmidrule(lr){2-19}

& 
\multicolumn{1}{r}{+ CAOTE} & 
27.74 & 46.67 & 54.97 & 52.71 & 48.28 & 33.66 & 34.51 & 24.73 & 26.99 & 73 & 86.8 & 42.86 & 5 & 66.5 & 61.06 & 48.29 & 45.86 
\\

& 
\multicolumn{1}{r}{+ FastCAOTE} & 
28.8 & 47.08 & 54.67 & 52.95 & 47.13 & 34.36 & 34.21 & 24.53 & 27.04 & 72 & 87.87 & 43.03 & 5.5 & 69.5 & 61.06 & 49.69 & 46.21 
\\

\cmidrule[0.8pt](lr){2-19}

&
TOVA &  
24.86 & 46.78 & 54.83 & 54.52 & 49.00 & 26.40 & 33.44 & 24.76 & 27.00 & 71.00 & 91.11 & 43.29 & 6.25 & 87.00 & 61.49 & 51.79 & 47.09
\\
\cmidrule(lr){2-19}

& 
\multicolumn{1}{r}{+ CAOTE} &
25.65 & 46.88 & 54.5 & 55.42 & 48.73 & 26.54 & 33.47 & 24.8 & 27.02 & 72 & 91.11 & 43.26 & 6.25 & 87 & 61.36 & 51.3 & 47.21
\\

& 
\multicolumn{1}{r}{+ FastCAOTE}& 
25.25 & 46.75 & 54.76 & 56.29 & 48.94 & 26.25 & 33.37 & 24.81 & 27.01 & 71.5 & 91.11 & 43.24 & 5.25 & 89 & 61.32 & 52.1  & \textbf{48.58}
\\ 

\cmidrule[0.8pt](lr){2-19}
& SnapKV &
25.15 & 46.55 & 53.39 & 56.00 & 48.75 & 27.82 & 33.67 & 24.85 & 27.01 & 72.50 & 91.78 & 43.54 & 5.08 & 100.00 & 61.48 & 51.41 & 48.06
\\
\cmidrule(lr){2-19}

& 
\multicolumn{1}{r}{+ CAOTE} &
 27.06 & 46.42 & 53.76 & 56.51 & 47.79 & 28.13 & 33.87 & 24.93 & 27.02 & 73 & 91.38 & 43.29 & 6.75 & 99.5 & 61.49 & 51.99 & 48.31
\\

& 
\multicolumn{1}{r}{+ FastCAOTE} & 
 26.91 & 46.59 & 53.47 & 56.63 & 48.54 & 29.27 & 33.91 & 24.86 & 27.01 & 73 & 91.38 & 43.49 & 6.75 & 100 & 61.49 & 51.78 & \underline{48.44}
\\


\midrule
\midrule
\addlinespace
\multicolumn{2}{c}{Llama 3.2-3B} &
23.76 & 40.23 & 50.09 & 50.69 & 42.29 & 26.84 & 33.09 & 24.30 & 25.21 & 72.50 & 90.11 & 42.58 & 3.00 & 96.50 & 56.22 & 56.52 & 45.87 \\
\midrule
\multirow{11}{*}{6k} & Sink &
19.33 & 40.29 & 37.95 & 46.48 & 40.29 & 15.31 & 30.43 & 21.35 & 25.14 & 71.50 & 88.93 & 42.04 & 3.50 & 47.00 & 56.55 & 54.11 & 40.01
\\
\cmidrule[0.8pt](lr){2-19}
 & H2O & 4.62 & 38.81 & 39.06 & 34.66 & 35.52 & 15.21 & 10.51 & 10.01 & 24.25 & 61.50 & 53.23 & 27.37 & 0.50 & 13.00 & 54.55 & 32.29 & 28.44
\\
\cmidrule[0.8pt](lr){2-19}
& \multicolumn{1}{r}{+CAOTE} & 16.14 & 41.68 & 49.36  & 46.7 & 43.36 & 22.75  & 32.07 & 21 & 25.07  & 69 & 80.02 & 39.33 & 1.5 & 26 & 55.82 & 49.05 & 38.68
\\
& \multicolumn{1}{r}{+FastCAOTE} &  16.31 & 41.94 & 49.17  & 45.64 & 41.83 & 21.68 & 32.07 & 20.73 & 25.02 & 68.5 & 80.34 & 39.88 & 3.5 & 24 & 55.83 & 48.7 & 38.45
\\
\cmidrule[0.8pt](lr){2-19}
& TOVA & 20.22 & 39.78 & 45.86 & 49.08 & 41.54 & 20.43 & 30.50 & 22.17 & 25.11 & 66.50 & 89.00 & 42.50 & 4.00 & 46.50 & 55.57 & 57.53 & 41.02
\\
\cmidrule[0.8pt](lr){2-19}
& \multicolumn{1}{r}{+CAOTE} & 21.17 & 39.69 & 47.21 & 48.82 & 41.7 & 20.59 & 30.72 & 22.36 & 25.1 &68 & 89 & 42.38 & 3.5 & 52.5 & 55.6 & 57.09 & 41.59
\\
& \multicolumn{1}{r}{+FastCAOTE} & 21.48 & 39.66 & 47.02 & 47.56 & 41.95 & 19.91 & 30.8 & 21.98 & 25.17 & 67.5 & 89.5 & 42.06  & 4 & 53 & 55.6 & 57.39 & 41.54
\\
\cmidrule[0.8pt](lr){2-19}
& SnapKV & 20.83 & 39.65 & 44.48 & 49.30 & 40.18 & 20.28 & 31.27 & 22.73 & 25.09 & 69.00 & 89.95 & 41.47 & 4.00 & 85.00 & 55.69 & 57.82 & 43.55
\\
\cmidrule[0.8pt](lr){2-19}
& \multicolumn{1}{r}{+CAOTE} & 20.23 & 39.65 & 44.91 & 50.16 & 40.58 & 21.32 & 31.23 & 22.51 & 25.13 & 69 & 90 & 41.83 & 5 & 89.5 & 55.84 & 57.24 & \underline{44.01}
\\
& \multicolumn{1}{r}{+FastCAOTE} & 20.09 & 40.02 & 44.58  & 48.57 & 42.12 & 22.51 & 31.25 & 22.89 & 25.15& 71 & 90 & 41.83  & 4 & 89.5 & 55.83 & 57.25 & \textbf{44.16}
\\

\midrule
\multirow{11}{*}{8k} & Sink &
20.15 & 40.02 & 41.94 & 48.15 & 42.24 & 16.01 & 31.64 & 22.10 & 25.20 & 73.00 & 89.26 & 42.37 & 3.50 & 62.50 & 56.86 & 56.63 & 41.97
\\
\cmidrule[0.8pt](lr){2-19}
 & H2O & 
9.65 & 39.66 & 43.20 & 38.09 & 40.41 & 21.46 & 17.80 & 13.28 & 24.67 & 70.00 & 64.30 & 32.19 & 2.00 & 24.50 & 55.00 & 39.09 & 33.46
\\
\cmidrule[0.8pt](lr){2-19}
& \multicolumn{1}{r}{+CAOTE} &
20.07 & 40.73 & 47.76  & 47.25 & 42.88 & 23.19  & 32.41 & 22.01 & 25.15  & 71 & 83.58 & 40.8 & 3 & 43.5  & 55.45 & 53.35 & 40.76
\\
& \multicolumn{1}{r}{+FastCAOTE} & 
 20.81 & 40.54 & 48.1  & 47.35 & 43.4 & 25.13  & 32.73 & 22.31 & 25.18 & 71.5 & 84.91 & 40.6 & 4 & 45 & 55.84 & 52.89 & 41.27
\\
\cmidrule[0.8pt](lr){2-19}
& TOVA & 21.08 & 40.67 & 49.07 & 48.69 & 41.93 & 23.05 & 31.64 & 22.85 & 25.21 & 69.00 & 89.25 & 42.19 & 2.50 & 71.00 & 55.77 & 57.47 & 43.21
\\
\cmidrule[0.8pt](lr){2-19}
& \multicolumn{1}{r}{+CAOTE} &
21.97 & 40.66 & 49.37 & 50.1 & 41.29 & 24.05 & 31.65 & 22.85 & 25.16 & 69.5 & 89.5 & 42 & 3 & 78.5  & 55.82 & 57.16 & 43.91
\\
& \multicolumn{1}{r}{+FastCAOTE} &
22.73 & 40.51 & 49.36 & 50.18 & 42.26 & 24.45 & 31.68 & 23.09 & 25.16 & 69.5 & 89.5 & 42.28 & 4.5 & 80 & 55.79 & 57.16 & 44.26
\\
\cmidrule[0.8pt](lr){2-19}
& SnapKV & 20.49 & 40.80 & 48.16 & 48.78 & 41.65 & 24.79 & 31.81 & 23.46 & 25.17 & 70.00 & 90.17 & 41.99 & 5.00 & 94.00 & 55.77 & 57.29 & \textbf{44.96}
\\
\cmidrule[0.8pt](lr){2-19}
& \multicolumn{1}{r}{+CAOTE} &
19.71 & 40.7 & 48.05 & 49.03 & 41.27 & 22.95 & 31.95 & 23.1 & 25.21 & 72 & 90 & 41.88 & 4 & 95  & 55.77 & 57.02 & \underline{44.85}
\\
& \multicolumn{1}{r}{+FastCAOTE} & 20.13 & 40.71 & 48.35  & 48.62 & 41.04 & 24.38 & 32.19 & 23.04 & 25.20 & 72 & 90 & 42.33 & 3.5 & 95  & 55.77 & 57.03 & \textbf{44.96}
\\
\bottomrule
\end{tabular}
}
\end{center}
\end{table*}
\begin{table*}[!t]
\caption{\textbf{LongBench results for Qwen 2.5-7B/2.5-3B-Instruct.}
Higher number is better. 
We highlight the best performing methods within a given budget with \textbf{bold} and the second best with underline. 
} 
\label{table:longbench_reduced_qwen_68}
\vspace{-5mm}
\begin{center}
\resizebox{\textwidth}{!}{
\begin{tabular}{clccccccccccccccccc}
\toprule

&
&
\multicolumn{3}{c}{Single Doc. QA} &
\multicolumn{3}{c}{Multi Doc. QA} &
\multicolumn{3}{c}{Summarization} &
\multicolumn{3}{c}{Few$\-$shot Learning} &
\multicolumn{2}{c}{Synthetic} &
\multicolumn{2}{c}{Code} &
\\

\cmidrule(lr){3-5}
\cmidrule(lr){6-8}
\cmidrule(lr){9-11}
\cmidrule(lr){12-14}
\cmidrule(lr){15-16}
\cmidrule(lr){17-18}

&
& 
{Narrative QA} & {Qasper} & {MF-en} & 
{HotpotQA} & {2WikiMQA} & {Musique} & 
{GovReport} & {QMSum} & {MultiNews} & 
{TREC} & {TriviaQA} & {SAMSum} & 
{PCount} & {PR-en} &
{Lcc} & {RB-P} & 
{Avg.} 
\\

\midrule

\multicolumn{2}{c}{Qwen 2.5-7B} &
 15.75 & 16.94 & 32.38 & 11.89 & 11.88 & 7.95 & 34.33 & 19.91 & 22.67 & 65.5 & 87.05 & 44.75 & 4.22 & 93.08 & 57.74 & 61.84 & 36.74
 \\
\midrule
\multirow{11}{*}{6k} & Sink & 7.37 & 16.61 & 25.73 & 11.29 & 11.27 & 5.69 & 31.47 & 18.72 & 22.86 & 64.5 & 84.86 & 44.47 & 3.59 & 41.48 & 55.89 & 55.99 & 31.36
\\
\cmidrule[0.8pt](lr){2-19}

 & H2O & 3.34 & 14.79 & 23.94 & 11.45 & 11.3 & 5.52 & 14.63 & 14.27 & 22.06 & 55.75 & 51.99 & 28.01 & 1.39 & 9.41 & 54.68 & 38.32 & 22.55

\\
\cmidrule(lr){2-19}
& 
\multicolumn{1}{r}{+ CAOTE} & 4.78 & 18.06 & 32.49 & 16.23 & 17.28 & 9.57 & 29.81 & 18.04 & 22.86 & 59.5 & 63.05 & 36.91 & 2.7 & 28.25 & 55.13 & 42.42 & 28.57

\\
& 
\multicolumn{1}{r}{+ FastCAOTE} & 5.69 & 16.99 & 32.62 & 18.22 & 16.58 & 10.48 & 30.3 & 17.71 & 22.88 & 59.5 & 62.95 & 36.29 & 2.1 & 27.65 & 56.3 & 40.65 & 28.56

\\

\cmidrule[0.8pt](lr){2-19}

&
TOVA & 15.77 & 15.33 & 30.31 & 19.3 & 13.78 & 9.11 & 30.4 & 19.95 & 22.91 & 61.5 & 83.47 & 42.9 & 1.15 & 21.775 & 57.68 & 57.99 & 31.46
 
\\
\cmidrule(lr){2-19}

& 
\multicolumn{1}{r}{+ CAOTE} & 15.81 & 16.07 & 29.39 & 19.4 & 14.15 & 10.8 & 30.89 & 20.54 & 22.86 & 62 & 84.92 & 43.19 & 2.17 & 30 & 57.76 & 57.53 & 32.34

\\

& 
\multicolumn{1}{r}{+ FastCAOTE}& 15.67 & 16.23 & 30.4 & 19.45 & 13.32 & 10.18 & 30.77 & 20.2 & 22.82 & 61.5 & 83.29 & 43.3 & 1.5 & 28.75 & 57.71 & 58.1 & 32.07

\\

\cmidrule[0.8pt](lr){2-19}

&
SnapKV & 14.34 & 16.35 & 31.12 & 17.56 & 14.1 & 8.74 & 31.09 & 20.16 & 22.84 & 60 & 83.8 & 42.99 & 2.91 & 54.17 & 57.48 & 60.26 & 33.62

\\
\cmidrule(lr){2-19}

& 
\multicolumn{1}{r}{+ CAOTE} & 12.77 & 16.3 & 31.33 & 19.74 & 14.06 & 11.07 & 31.02 & 20.85 & 22.91 & 61.5 & 83.79 & 42.97 & 4.8 & 68.25 & 57.54 & 61.08 & \textbf{35.00}

 \\

& 
\multicolumn{1}{r}{+ FastCAOTE} & 12.98 & 15.93 & 31.3 & 18.58 & 13.82 & 9.45 & 30.96 & 20.27 & 22.88 & 61.5 & 84.58 & 43.28 & 5.34 & 65.48 & 57.54 & 60.25 & \underline{34.63}

 \\


\midrule

\multirow{10}{*}{8k} &
H2O & 6.1 & 15.55 & 28.29 & 12.37 & 14.65 & 6.24 & 20.78 & 17.22 & 22.44 & 59 & 58.74 & 33.05 & 1.82 & 15.73 & 55.63 & 44.56 & 25.76

\\
\cmidrule(lr){2-19}

& 
\multicolumn{1}{r}{+ CAOTE} & 
8.65 & 15.59 & 34.92 & 20.41 & 15.95 & 13.6 & 32.11 & 20.05 & 22.82 & 63.5 & 78.2 & 40.66 & 3.85 & 46.33 & 57.19 & 51.7 & 32.85
\\

& 
\multicolumn{1}{r}{+ FastCAOTE} & 6.76 & 15.88 & 34.3 & 20.75 & 16.2 & 16.82 & 31.95 & 20.67 & 22.81 & 62.5 & 77.33 & 41.02 & 3.03 & 47.08 & 57.23 & 50.48 & 32.8
\\ 

\cmidrule[0.8pt](lr){2-19}

&
TOVA & 15.69 & 15.55 & 33.09 & 18.37 & 13.99 & 11.26 & 31.33 & 20.17 & 22.82 & 62 & 84.49 & 43.01 & 2.78 & 30.33 & 57.45 & 58.96 & 32.58

\\
\cmidrule(lr){2-19}

& 
\multicolumn{1}{r}{+ CAOTE} & 16.38 & 15.46 & 32.16 & 17.86 & 14.24 & 12.76 & 31.34 & 20.2 & 22.8 & 61 & 83.97 & 43.23 & 2.01 & 38.83 & 57.45 & 59.41 & 33.07

\\ 

& 
\multicolumn{1}{r}{+ FastCAOTE}& 17.06 & 15.55 & 32.32 & 17.57 & 14.14 & 13.03 & 31.43 & 20.3 & 22.78 & 62 & 84.86 & 43.3 & 2.41 & 41.58 & 57.41 & 58.93 & 33.42

\\

\cmidrule[0.8pt](lr){2-19}

&
SnapKV &  15.6 & 15.81 & 33.47 & 18.02 & 14.49 & 10.53 & 31.99 & 20.09 & 22.84 & 61 & 84.08 & 43.01 & 4.58 & 64.25 & 57.46 & 60.59 & 34.86

\\
\cmidrule(lr){2-19}

& 
\multicolumn{1}{r}{+ CAOTE} & 15.55 & 15.57 & 33.89 & 21.08 & 14.43 & 12.38 & 31.41 & 20.73 & 22.83 & 61.5 & 85.11 & 43.39 & 5.22 & 75.75 & 57.44 & 60.35 & \textbf{36.04}

\\

& 
\multicolumn{1}{r}{+ FastCAOTE} & 13.38 & 15.77 & 33.97 & 19.78 & 15.08 & 13.01 & 31.44 & 20.69 & 22.77 & 62 & 85.66 & 43.69 & 4.24 & 75.33 & 57.44 & 60.04 & \underline{35.89}

\\

\midrule
\midrule
\addlinespace
\multicolumn{2}{c}{Qwen 2.5-3B} &
18.08 & 22.49 & 39.72 & 27.86 & 20.45 & 18.93 & 32.8 & 23.74 & 24.89 & 67.5 & 85.05 & 43.88 & 5 & 40.97 & 51.91 & 47.53 & 35.68
\\
\midrule
\multirow{11}{*}{6k} & Sink & 13.01 & 20.03 & 32.59 & 18.62 & 15.77 & 9.37 & 30.98 & 20.7 & 24.97 & 66.5 & 75.39 & 42.77 & 4 & 14.92 & 52.32 & 50.35 & 30.77
\\
\cmidrule[0.8pt](lr){2-19}

 & H2O &  5.52 & 18.62 & 27.93 & 12.61 & 15.07 & 4.26 & 14.92 & 13.89 & 24.21 & 58 & 45.94 & 24.93 & 2.91 & 9.1 & 49.5 & 34.54 & 22.62
\\
\cmidrule[0.8pt](lr){2-19}
& 
\multicolumn{1}{r}{+ CAOTE} & 8.23 & 21.34 & 36.28 & 22.43 & 17.92 & 13.53 & 31.57 & 21.2 & 24.91 & 65 & 74.3 & 39.09 & 4.58 & 21.25 & 50.47 & 42.71 & 30.93
\\
& 
\multicolumn{1}{r}{+ FastCAOTE} &  9.29 & 20.47 & 35.8 & 21.67 & 18.14 & 13.65 & 31.34 & 20.52 & 24.82 & 64.5 & 75.88 & 39.16 & 5.72 & 20.42 & 50.6 & 44.38 & 31.02
\\
\cmidrule[0.8pt](lr){2-19}
& TOVA & 13.62 & 19.56 & 34.64 & 21.67 & 16.25 & 8.47 & 30.17 & 23.1 & 24.94 & 63.5 & 81.88 & 42.97 & 1.16 & 10.58 & 51.3 & 47.7 & 30.72
\\
\cmidrule[0.8pt](lr){2-19}
&
\multicolumn{1}{r}{+ CAOTE} & 13.28 & 19.82 & 35.25 & 22.6 & 15.5 & 8.97 & 30.4 & 23.17 & 24.84 & 64.5 & 81.68 & 43.46 & 2.07 & 13.21 & 50.56 & 47.05 & 31.02
\\
& 
\multicolumn{1}{r}{+ FastCAOTE} & 13.34 & 19.71 & 35.58 & 22.02 & 15.65 & 8.76 & 30.49 & 23.26 & 24.82 & 65 & 80.92 & 43.66 & 1.75 & 13 & 51.45 & 47.74 & 31.07
\\
\cmidrule[0.8pt](lr){2-19}
& SnapKV & 14.16 & 20.09 & 36.15 & 19.14 & 15.59 & 12.7 & 30.35 & 22.75 & 24.91 & 65 & 83.92 & 43.52 & 5.00 & 32.2 & 51.04 & 47.49 & 32.75
\\
\cmidrule[0.8pt](lr){2-19}
& \multicolumn{1}{r}{+CAOTE} &  14.3 & 20.13 & 34.89 & 20.32 & 15.06 & 12.85 & 30.61 & 23.16 & 24.9 & 66.5 & 84.75 & 43.3 & 4.75 & 33.9 & 51.48 & 48.31 & \textbf{33.08}
\\
& \multicolumn{1}{r}{+FastCAOTE} & 14.33 & 19.48 & 34.59 & 20.8 & 16.54 & 11.85 & 30.68 & 23.19 & 24.93 & 66.5 & 83.54 & 43.55 & 4.62 & 33 & 51.06 & 47.93 & \underline{32.91}
\\
\midrule
\multirow{10}{*}{8k} & H2O & 6.16 & 19.84 & 32.32 & 16.01 & 17.74 & 4.99 & 20.21 & 16.49 & 24.54 & 64 & 56.1 & 32.56 & 3.13 & 11.61 & 50.61 & 38.8 & 25.94

\\
\cmidrule[0.8pt](lr){2-19}
& \multicolumn{1}{r}{+CAOTE} &  11.53 & 21.59 & 38.02 & 25.62 & 20.19 & 15.11 & 32.18 & 21.82 & 24.81 & 67.5 & 79.32 & 41.2 & 5.15 & 25.5 & 50.72 & 45.27 & 32.85

\\
& \multicolumn{1}{r}{+FastCAOTE} &  11.65 & 21.2 & 37.92 & 24.47 & 20.32 & 13.25 & 32.11 & 21.75 & 24.84 & 67.5 & 78.07 & 40.27 & 5.17 & 25.21 & 50.17 & 46.37 & 32.52

\\
\cmidrule[0.8pt](lr){2-19}
& TOVA & 14.66 & 20.93 & 37.77 & 22.57 & 17.08 & 9.63 & 31.12 & 23.17 & 24.83 & 67 & 84.11 & 43.55 & 2.06 & 13.08 & 51.32 & 47.64 & 31.91

\\
\cmidrule[0.8pt](lr){2-19}
& \multicolumn{1}{r}{+CAOTE} & 13.98 & 21 & 36.91 & 22.97 & 17.05 & 9.93 & 31.25 & 23.45 & 24.9 & 66.5 & 84.14 & 43.86 & 3.07 & 13.92 & 51.14 & 47.883 & 32.00

\\
& \multicolumn{1}{r}{+FastCAOTE} & 14.84 & 20.66 & 37.45 & 23.05 & 17.07 & 10.16 & 31.2 & 23.47 & 24.85 & 66.5 & 84.01 & 43.41 & 3.31 & 13.25 & 51.19 & 48.11 & 32.03

\\
\cmidrule[0.8pt](lr){2-19}
& SnapKV & 12.76 & 20.88 & 37.1 & 22.49 & 18.19 & 13.83 & 31.33 & 23.37 & 24.8 & 65.5 & 84.88 & 44.49 & 5.2 & 35.83 & 51.31 & 47.82 & 33.74

\\
\cmidrule[0.8pt](lr){2-19}
& \multicolumn{1}{r}{+CAOTE} & 13.66 & 20.41 & 38.08 & 24.76 & 17.31 & 13.21 & 31.3 & 23.62 & 24.82 & 66.5 & 84.88 & 44.16 & 5.17 & 36.58 & 51.24 & 47.94 & \underline{33.98}

\\
& \multicolumn{1}{r}{+FastCAOTE} & 14.56 & 20.99 & 37.61 & 25.56 & 18.02 & 13.89 & 31.37 & 23.27 & 24.83 & 66.5 & 84.88 & 44.01 & 5 & 35.92 & 51.13 & 48.14 & \textbf{34.11}

\\
\bottomrule
\end{tabular}
}
\end{center}
\end{table*}
LongBench result for $6$k and $8$K for \texttt{Llama 3.2-3B-Instruct}/3.1-8B-Instruct and \texttt{Qwen 2.5-3B-Instruct}/8B-Instruct are shown in Table \cref{table:longbench_reduced_llama_68}, \cref{table:longbench_reduced_qwen_68} respectively.
We also include Sink attention results \cite{xiaoefficient} with budget of $6$k and $8$k. For \texttt{Llama 3.1-8B-Instruct}, \textit{TOVA-FastCAOTE} performs best for $6$k budget, while \textit{SnapKV-FastCAOTE} for $8$k budget. For \texttt{Llama 3.2-3B-Instruct} \textit{SnapKV-FastCAOTE} performs best for both $6$k and $8$k budget. On the other hand, for \texttt{Qwen 2.5-7B-Instruct}, \textit{SnapKV-CAOTE} performs the best for both $6$k and $8$k, and for \texttt{Qwen 2.5-3B-Instruct}, \textit{SnapKV-COATE} performs best for $6$k budget, and \textit{SnapKV-FastCAOTE} performs best for $8$k budget. Additionally, note that all baseline token eviction methods achieve boost in accuracy when using \textit{CAOTE} or \textit{FastCAOTE}.

\subsection{Perplexity}
\label{subsec:appendix_results_ppl}
\begin{table*}[t]
\vspace{-2mm}
\caption{\textbf{Perplexity difference between different eviction methods with dense baseline.} Lower is better. Negative entry in table means the method performs better than dense baseline. The PPL of Qwen 2.5-3B-Instruct and Qwen 2.5-7B-Instruct is 8.4547 and 7.3188 respectively.}
\vspace{-2mm}
\label{table:PPL_qwen_wide}
\begin{center}\resizebox{0.8\textwidth}{!}{%
\begin{tabular}{cccccccccc}

\toprule

\multirow{2}{*}{Budget} & \multicolumn{3}{c}{H2O} & \multicolumn{3}{c}{TOVA} & 
\multicolumn{3}{c}{SnapKV}
\\

\cmidrule[0.5pt](lr){2-4}
\cmidrule[0.5pt](lr){5-7}
\cmidrule[0.5pt](lr){8-10}

& & +CAOTE & +FastCAOTE & & +CAOTE & +FastCAOTE & & +CAOTE & +FastCAOTE
\\

\midrule[0.7pt]
\multicolumn{10}{c}{Qwen 2.5-7B-Instruct}
\\
\midrule

2k & 0.4253 & 0.4422 & 0.3917 & 0.0567 & 0.059 & 0.1077 & 0.0369 & 0.0987 & \textbf{0.0307}
\\ 

\midrule[0.7pt]
\multicolumn{10}{c}{Qwen 2.5-3B-Instruct}
\\
\midrule

2k & 0.2585 & 0.2168 & 0.2154 & 0.0603 & 0.0513 & 0.0507 & 0.0278 & 0.0199 & \textbf{0.0196}
\\ 
\bottomrule
\end{tabular}
}%
\vspace{-5mm}
\end{center}
\end{table*}
We show perplexity results for \texttt{Qwen2.5} models in \cref{table:PPL_qwen_wide} for budget of $2$k. \textit{SnapKV-FastCAOTE} performs best for both \texttt{Qwen 2.5-3B-Instruct} and 2.5-7B-Instruct, and using \textit{CAOTE}, all methods achieve improved perplexity.

\subsection{Needle in a Haystack}
\label{subsec:appendix_results_needle}
\begin{table*}[t]
\caption{\textbf{Needle-in-haystack accuracy} for Qwen 2.5-3B/2.5-7B-Instruct using baseline eviction methods with(out) \textit{CAOTE}. Higher is better, maximum accuracy is $1.0$.
}
\vspace{-2mm}
\label{table:needle_qwen_wide}
\begin{center}\resizebox{0.8\textwidth}{!}{
\begin{tabular}{cccccccccc}

\toprule
\multirow{2}{*}{Budget} & \multicolumn{3}{c}{H2O} &  \multicolumn{3}{c}{TOVA} &  \multicolumn{3}{c}{SnapKV} 
\\
\cmidrule(lr){2-4}
\cmidrule(lr){5-7}
\cmidrule(lr){8-10}
& & +CAOTE & +FastCAOTE & & +CAOTE & +FastCAOTE & & +CAOTE & +FastCAOTE
\\
\midrule
\multicolumn{10}{c}{Qwen 2.5-7B-Instruct}
\\
\midrule
6k & 0.206	& 0.312	& 0.3 & 0.292 &	0.292 &	0.286 &	0.32 & 0.33 & \textbf{0.332}
\\
\midrule
\multicolumn{10}{c}{Qwen 2.5-3B-Instruct}
\\
\midrule
6k &  0.212 & 0.288 & 0.27 & 0.282 & 0.286 & 0.288 & 0.304 & 0.324 & \textbf{0.336}

\\
\bottomrule
\end{tabular}
}
\vspace{-5mm}
\end{center}
\end{table*}
\cref{table:needle_qwen_wide} shows Needle-in-haystack results for \texttt{Qwen2.5} models for budget=$6$k. \textit{SnapKV-FastCAOTE} performs best for both \texttt{Qwen 2.5-3B-Instruct} and 2.5-7B-Instruct, and using \textit{CAOTE}, all methods achieve improved accurracy.

\end{document}